\begin{document}
%
\title{Tackling Long-tailed Distribution Issue in Graph Neural Networks via Normalization}
%
%
%

\author{Langzhang Liang,~
        Zenglin Xu,~\IEEEmembership{Senior Member,~IEEE,}
        Zixing Song,~ Irwin King,~\IEEEmembership{Fellow,~IEEE},~Yuan Qi,~and~Jieping Ye, \IEEEmembership{Fellow,~IEEE}
\thanks{Langzhang Liang and Zenglin Xu are with the Department
of Computer Science and Technology, Harbin Institute of Technology, Shenzhen,
Guangdong, China (E-mail: $\left\{\text{lazylzliang,zenglin}\right\}$@gmail.com).}
\thanks{Zixing Song and Iwrin King are with the Department of Computer Science and Engineering, Shatin, Hong Kong (E-mail: $\left\{\text{zxsong, king}\right\}$@cse.cuhk.edu.hk).}
\thanks{Yuan Qi is with the Artificial Intelligence Innovation and Incubation Institute, Fudan University, Shanghai, China (E-mail: qiyuan@fudan.edu.cn).}
\thanks{Jieping Ye is with the Department of Electrical Engineering and Computer Science, University of Michigan, Ann Arbor, USA (E-mail:jieping@gmail.com).}
\thanks{Zenglin Xu is the corresponding author}

\thanks{This work has been submitted to the IEEE for possible publication. Copyright may be transferred without notice, after which this version may no longer be accessible.}}

%
%

\markboth{Journal of \LaTeX\ Class Files,~Vol.~14, No.~8, August~2015}%
{Shell \MakeLowercase{\textit{et al.}}: Tackling Long-tailed Distribution Issue in Graph Neural Networks via Normalization}
%

\newtheorem{conj}{Conjecture}
\newtheorem{prop}{Proposition}



\maketitle

\begin{abstract}
  Graph Neural Networks (GNNs) have attracted much attention due to their superior learning capability. Despite the successful applications of GNNs in many areas, their performance suffers heavily from the long-tailed node degree distribution. Most prior studies tackle this issue by devising sophisticated model architectures. In this paper, we aim to improve the performance of tail nodes (low-degree or hard-to-classify nodes) via a generic and light normalization method. In detail, we propose a novel normalization method for GNNs, termed as ResNorm, which \textbf{Res}hapes a long-tailed distribution into a normal-like distribution via \textbf{Norm}alization. The ResNorm includes two operators. Firstly, the \textit{scale} operator reshapes the distribution of the node-wise standard deviation (NStd) so as to improve the accuracy of tail nodes. Secondly, the analysis of the behavior of the standard shift indicates that the standard shift serves as a preconditioner on the weight matrix, increasing the risk of over-smoothing. To address this issue, we design a new \textit{shift} operator for ResNorm, which simulates the degree-specific parameter strategy in a low-cost manner. Extensive experiments on various node classification benchmark datasets have validated the effectiveness of ResNorm in improving the performance of tail nodes as well as the overall performance.
\end{abstract}

\begin{IEEEkeywords}
GNN, normalization, long-tailed distribution 
\end{IEEEkeywords}

%
\IEEEpeerreviewmaketitle

\section{Introduction}
Graph Neural Networks (GNNs)~\cite{DBLP:conf/nips/HamiltonYL17,DBLP:conf/iclr/KipfW17,DBLP:conf/iclr/VelickovicCCRLB18} have achieved remarkable successes in many fields, such as recommender systems~\cite{ tang2013social,DBLP:conf/www/Fan0LHZTY19}, information retrieval~\cite{DBLP:conf/www/ZhangLLXSLXYW21}, and drug discovery~\cite{DBLP:conf/www/GuoZYHW0C21,DBLP:conf/icml/LuoYJ21}. Many GNN variants have been proposed including Graph Convolutional Networks (GCNs)~\cite{DBLP:conf/iclr/KipfW17}, 
Graph Attention Networks (GATs)~\cite{DBLP:conf/iclr/VelickovicCCRLB18}, GraphSAGE~\cite{DBLP:conf/nips/HamiltonYL17}, etc.

Massive research efforts on GNNs have been devoted to studying issues of slow convergence~\cite{DBLP:conf/iclr/XuHLJ19,DBLP:conf/icml/ChenZS18} and the over-smoothing issue~\cite{Liu20deepGNN}. However, the long-tailed issue of graphs~\cite{DBLP:conf/recsys/LeeL11,DBLP:journals/kbs/HamedaniK19,DBLP:conf/recsys/Shi13}, which means a small number of nodes (\textit{i}.\textit{e}., head nodes) possess a large number of neighbors while the remaining majority (\textit{i}.\textit{e}., tail nodes) have only a few neighbors~\cite{DBLP:conf/kdd/LiuN021},
 has not been well explored in GNN literature. 
  
For example, in an online social network where each node represents a user and the links model the relationship between users, a celebrity may have millions of followers while most users have only a few followers.
As a consequence, the tail nodes commonly have inferior classification performance~\cite{tang2020investigating}. This can not only impact the overall performance across all nodes but also introduce biases and undermine fairness towards the under-represented tail nodes during the training of a GNN on these graphs.

To alleviate this limitation, we aim to design a normalization method for GNNs to deal with the long-tailed distribution prevalent in real-world graphs.
Normalization~\cite{DBLP:conf/icml/IoffeS15,DBLP:journals/corr/BaKH16} is commonly adopted to boost the training of deep learning models, such as LayerNorm~\cite{DBLP:journals/corr/BaKH16} for natural language understanding models and BatchNorm~~\cite{DBLP:conf/icml/IoffeS15} for image processing models. 
Although some studies have explored the normalization technique for GCNs~\cite{DBLP:conf/cikm/ZhouDWLHXF21,DBLP:conf/nips/Zhou0LZCH20,DBLP:conf/icml/CaiLXHL021,DBLP:conf/iclr/RongHXH20}, they neglect the long-tailed characteristic of node degree distribution. Unfortunately, how to design a normalization method that fits the long-tailed distribution of graph data remains unclear.

\begin{figure*}[t]
    \centering
    \includegraphics[width=1.5\columnwidth]{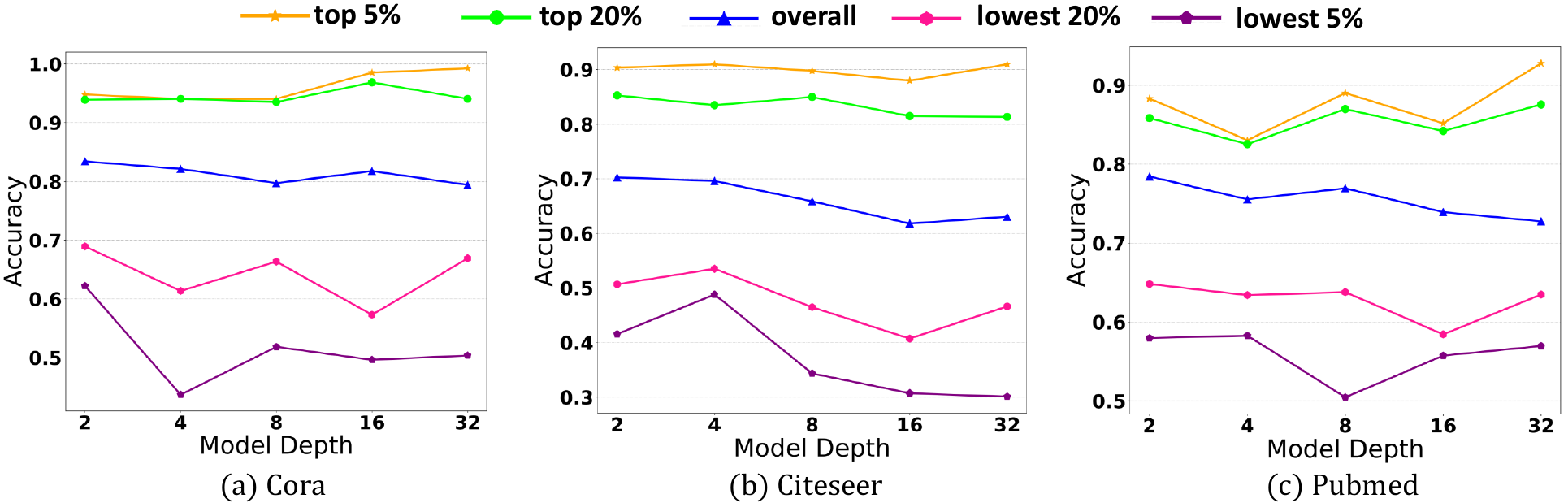}
    \caption{Classification accuracy statistics w.r.t. nodes with different Node-wise Standard Deviation (NStd) on the three datasets (\textit{i}.\textit{e}., Cora, Citeseer, and Pubmed). To sidestep the gradient vanishing problem, all GCNs are equipped with the residual connection. Each NStd value is computed based on the output of the corresponding previous hidden layer, respectively. }
    \label{fig:1}
\end{figure*}

As normalization is generally applied to node representations rather than the original input graph, we need to seek a proxy statistical measure instead of modifying the distribution of node degrees directly.
In detail, we consider the node-wise standard deviation (abbreviated as NStd) calculated from the node representations of the last layer of GCNs (as calculated in Eq.(\ref{eqn:nstd})). This choice is motivated by the similarity between the NStd distribution and its corresponding node degree distribution, as depicted in Figure \ref{fig:3}. 
To gain insights into the association between classification accuracy and NStd values, we categorize nodes into four groups: the top 5\%, the top 20\%, the lowest 5\%, and the lowest 20\% based on their NStd values. This analysis is conducted using three datasets (i.e., Cora, Citeseer, and Pubmed), and the classification accuracy is plotted against varying model depths in Figure~\ref{fig:1}. It is evident from the plot that nodes with higher NStd values are more likely to be correctly classified, while the tail nodes tend to be incorrectly classified.
Therefore, to improve the performance on the tail nodes, it is possible to reshape the NStd distribution of nodes such that the node degree distribution can be implicitly reshaped, \textit{i}.\textit{e}., uniforming the node degrees among head and tail nodes.


To this end, we propose a simple yet effective normalization method, named ResNorm, which includes two operators, i.e., the \textit{scale} operator and the \textit{shift} operator. In detail, the \textit{scale} operation of ResNorm reshapes the NStd distribution by reducing the gap between the head groups and the tail groups. We further provide a theoretical explanation for understanding how it works based on the influence distribution~\cite{DBLP:conf/icml/XuLTSKJ18}, as demonstrated in Figure~\ref{fig:influence_distribution}. For the design of the \textit{shift} operation of ResNorm, we mainly focus on the over-smoothing issue, which is a basic flaw of most popular GNN frameworks.  
Finally, we validate the effectiveness of ResNorm on nine popular node classification benchmarks. Experimental results show that ResNorm significantly improves the accuracy of the tail nodes, as well as the overall performance.

\section{Related Work}\label{sec:2}
In this section, we present related work on normalization techniques and long-tailed data classification. 

\subsection{Normalization Techniques for Graph Representation Learning}
Normalization is believed to reduce the training time substantially~\cite{DBLP:journals/corr/BaKH16}. \textit{Scale} and \textit{shift} are the main components of normalization. \textit{Scale} operation scales the given set of features down by its standard deviation. \textit{Shift} operation subtracts the mean from the given set of features. Formally, given a feature vector $\bm x \in \mathbb{R}^n$,  its mean denoted by $\mu=\frac{1}{n} \sum_{i=1}^n x_{i}$, and its variance denoted by $\sigma^2 = \frac{1}{n}\sum_{i=1}^n (x_i-\mu)^2$, a normalization step can be formulated as
\begin{equation}
    \text{Norm}(\bm x) = \gamma \frac{\bm x-\mu}{\sigma}+\beta,
\end{equation}
where $\gamma$ and $\beta$ are learnable parameters. Different normalization methods are applied to different feature vectors. BatchNorm~\cite{DBLP:conf/icml/IoffeS15} normalizes the features within the same channel across different samples in a batch, while LayerNorm~\cite{DBLP:journals/corr/BaKH16} normalizes the features within the same sample across different channels.

The successful applications of normalization in many domains have sparked research on benefiting the training of GNNs with normalization. GraphNorm~\cite{DBLP:conf/icml/CaiLXHL021} is adapted from InstanceNorm~\cite{DBLP:journals/corr/UlyanovVL16} and introduces a learnable parameter into the shift operation. NodeNorm removes the shift operation from normalization steps and scales a given node representation by its root of standard deviation. 
GraphNorm~\cite{DBLP:conf/icml/CaiLXHL021} aims to adaptively achieve a trade-off between preserving the information contained in the mean and accelerating the training process. NodeNorm~\cite{DBLP:conf/cikm/ZhouDWLHXF21} suggests that tackling the so-called variance inflammation issue via normalization. Besides, some normalization methods that tackle the over-smoothing issue have been proposed. For instance, PairNorm~\cite{DBLP:conf/iclr/ZhaoA20} mitigates over-smoothing by maintaining the node-pair distance, while Differentiable Group Normalization (DGN)~\cite{DBLP:conf/nips/Zhou0LZCH20} clusters nodes into multiple groups and then normalize them independently. Different from these methods, our proposed ResNorm addresses the problem of representation learning on long-tailed graphs. 

\subsection{Long-tailed Data Classification on Graphs}
Long-tailed data classification is an active topic in image classification~\cite{DBLP:conf/iccv/ZhangFWLQ17,DBLP:conf/cvpr/OuyangWZY16,DBLP:conf/nips/WangRH17} and is also known as the class imbalance problem that a small number of classes have a large number of samples while a large number of classes have only a few samples. 
However, long-tailed data classification is less studied in graph representation learning (GRL).
We emphasize that our definition of long-tailed data for graphs is different from that for images. Formally, a graph follows a long-tailed distribution if a small fraction of nodes have lots of neighbors while a large number of nodes have a few neighbors. Such long-tailed data pose a big challenge on GRL: it is hard to learn good representations for the tail nodes in downstream tasks due to their sparse edges and sensitivity to the bias. Most existing methods are designed for recommender system~\cite{DBLP:conf/www/SzpektorGM11,DBLP:conf/www/ZhangCYYHC21} or large-scale social networks~\cite{enders2008long,sankar2021graph,DBLP:conf/www/TraversoHTELP12}, where the long-tailed distribution issue is especially severe. Previous solutions utilize transfer learning to transfer knowledge from head items to tail items~\cite{DBLP:conf/www/ZhangCYYHC21,DBLP:conf/kdd/LiuN021}. Others suggest that adding different weights to user-item pairs~\cite{DBLP:journals/pvldb/YinCLYC12} and clustering~\cite{DBLP:conf/recsys/ParkT08}, as well as incorporating meta-learning~\cite{DBLP:conf/kdd/LeeIJCC19} and active learning~\cite{DBLP:journals/tkde/ZhuLHWGLC20}. Besides, SL-DSGCN~\cite{tang2020investigating} employs different parameters to train node representation with different degrees. Similarly, DEMO-Net~\cite{DBLP:conf/kdd/WuHX19} is another graph convolutional network that maps different structure subtrees to different feature vectors by using degree-specific parameters. We note that our normalization method is orthogonal to the existing methods as they mostly depend on fixed model architectures and lack flexibility, while the normalization layer can be easily incorporated into existing models in a plug-and-play manner.

\subsection{Graph Neural Networks}
In this section, we provide a brief review of three of the most representative GNN architectures.

Graph Convolutional Networks (GCNs)~\cite{DBLP:conf/iclr/KipfW17}, emerging as the most popular GNN architecture, have achieved high performance in the node classification task on various benchmarks~\cite{DBLP:conf/nips/HuFZDRLCL20}. GCNs perform a first-order approximated graph convolution in the graph spectral domain per layer. Mathematically, the layer of GCN is defined as
\begin{align}
    \bm H^{(l+1)} = \sigma(\tilde{\bm D}^{-\frac{1}{2}}\tilde{\bm A}\tilde{ \bm D}^{-\frac{1}{2}}\bm H^{(l)}\bm W^{(l)}).\label{eq:GNN_propagation}
\end{align}
Here, $\tilde{\bm D}^{-\frac{1}{2}}\tilde{\bm A}\tilde{\bm D}^{-\frac{1}{2}}$ is a propagation matrix with $\tilde{  \bm A}=\bm A+\bm I_n$, where $\bm I_n$ is an identity matrix and $\tilde{\bm D}_{ii} = \sum_j \tilde{\bm A}_{ij}$ is the degree matrix of $\tilde{\bm A}$, $\sigma(\cdot)$ denotes a non-linear function such as ReLU, $\bm{A}\in \mathbb{R}^{n \times n}$ is the adjacency matrix, $\bm W^{(l)}$ is a weight matrix at the $l$-th layer, $\bm H^{(l)}$ is the node representations matrix at the $l$-th layer.

Graph Attention Networks (GATs)~\cite{DBLP:conf/iclr/VelickovicCCRLB18} utilize the attention mechanism to adaptively learn representations by assigning learnable weights to neighbors. GATs update node representations by
    \begin{equation}
        \bm h_v^k = \sigma( \sum_{u \in \mathcal{N}(v) \cup v} \alpha_{vu}\bm W^k \bm h_u^{k-1}),
    \end{equation}
where $\mathcal{N}(v)$ denotes the neighbors of node $v$, and $\alpha_{vu}$ denotes an attention score indicating the importance of node $u$ to node $v$.
    
GraphSAGE~\cite{DBLP:conf/nips/HamiltonYL17} is a scalable GNN architecture based on neighbor sampling:
    \begin{equation}
    \begin{split}
        \bm h_{\mathcal{N}(v)}^k &= \text{AGGREGATE}(\left\{ \bm h_u^{k-1}|u \in \mathcal{N}(v) \right \})\\
        \bm h_{v}^k &= \sigma(\bm W^k \cdot \text{CONCAT}(\bm h_{\mathcal{N}(v)}^k, \bm h_{v}^{k-1})),
    \end{split}
    \end{equation}
where AGGREGATE denotes feature aggregation using mean-aggregator, max-aggregator, or LSTM-aggregator, $u \in \mathcal{N}(v)$ can be replaced with $u \in \mathcal{N}_k(v)$ which denotes the sampled neighbors of node $v$ at the k-th layer.

\section{Motivating Observations and Analysis}\label{sec:3}
In this section, we present observations on the NStd distributions followed by the interpretation. For a $d$-dimensional node representation, its NStd is computed across $d$ dimension features. Mathematically, for the node $v_i$ with $d$-dimensional representation $\bm x_i$, its NStd is calculated as
\begin{equation}
    \sigma_i = \sqrt{ \frac{1}{d} \sum_{j=1}^d (\bm x_{i,j} - \frac{1}{d} \sum_{j=1}^d \bm x_{i,j})^2}.
\label{eqn:nstd}
\end{equation}
\subsection{Observations}
We are interested in four types of nodes: a) nodes with the largest 5\% NStds; b) nodes with the largest 20\% NStds; c) nodes with the smallest 5\% NStds; d) nodes with the smallest 20\% NStds. Figure \ref{fig:1} reports the mean classification accuracy of these four types of nodes. Surprisingly, all of the results show the same pattern that those nodes with larger NStds consistently have higher classification accuracy. Such observations imply that NStd is a good indicator that has a strong positive correlation with node classification accuracy.

Since a larger NStd is generally associated with higher accuracy, we also refer to larger NStd nodes as head nodes and refer to smaller NStd nodes as tail nodes. The phenomenon that the gap of NStd existing between head and tail nodes is termed as variance expressing (variance expresses the difficulty level of node classification).

Figure \ref{fig:3} shows the NStd distribution and the node degree distribution of the Cora dataset. They are similar in shape and both exhibit the characteristics of the long-tailed distribution. Other datasets show the same pattern and the corresponding figures are provided in Appendix.
The similarity between these two distributions implies that reshaping the NStd distribution may achieve a similar effect of uniforming the node degree distribution. This leads to the core idea of this work: By reshaping the NStd distribution, we implicitly reshape the node degree distribution to some extent, which alleviates the issue caused by long-tailed distribution.

\begin{figure}[t]
    \centering
    \subfigure[Cora (NStd)] {\includegraphics[width=0.48\columnwidth]{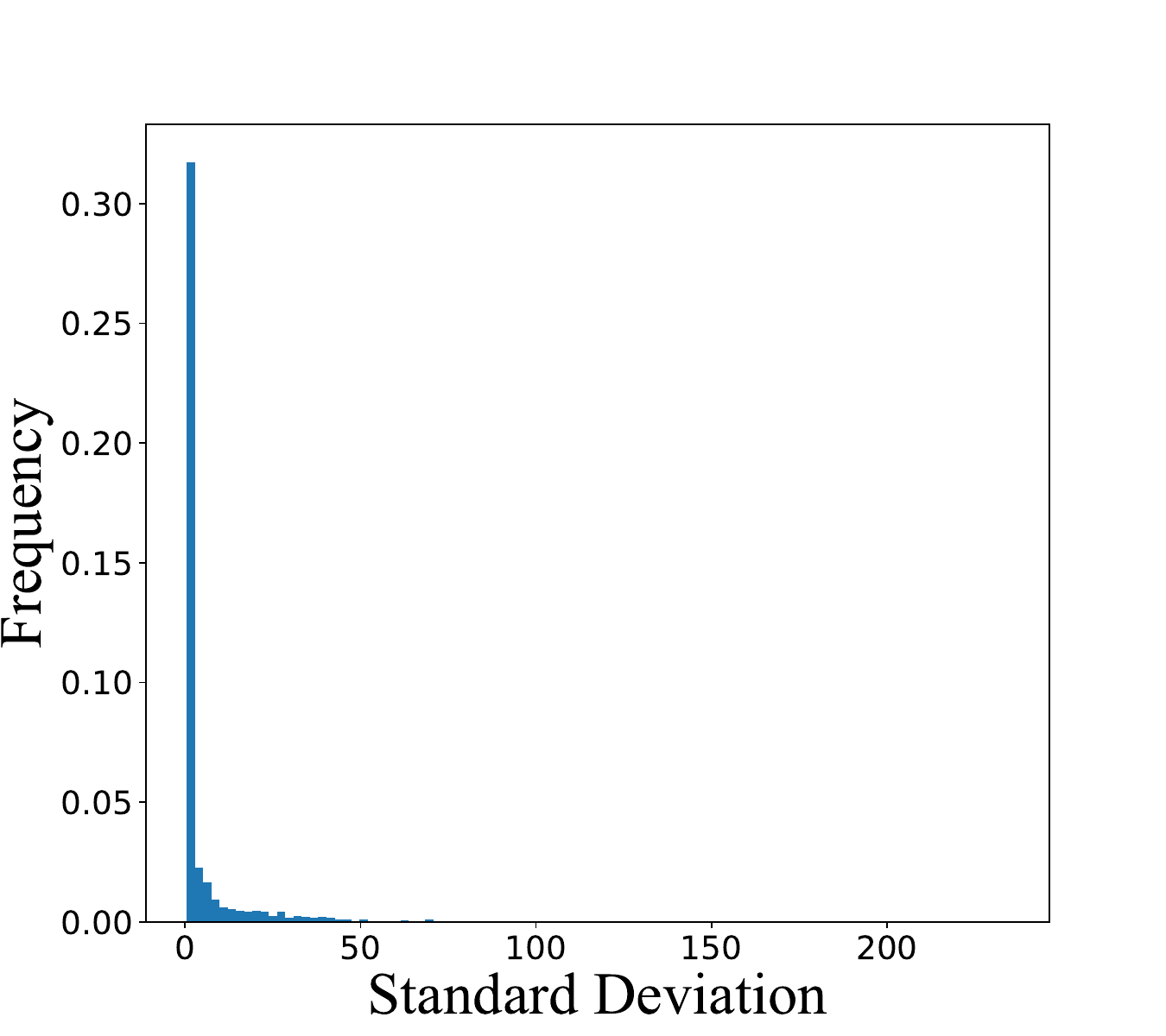}}
    \subfigure[Cora (Node degree)]
    {\includegraphics[width=0.48\columnwidth]{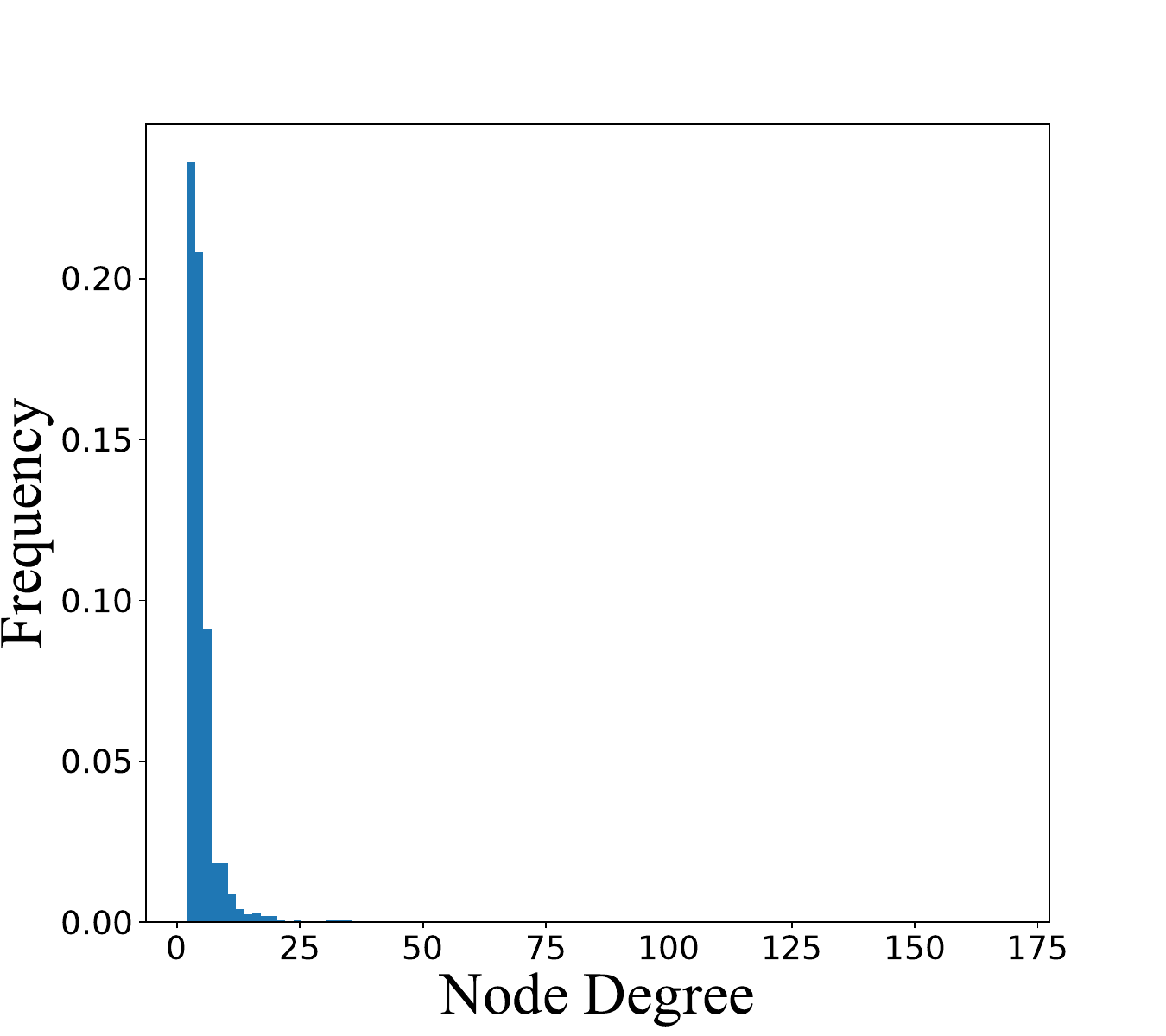}\label{fig:3b}}
    \caption{NStd distribution of the output of the last layer and the node degree distribution of the corresponding dataset. We use 32-layer GCNs equipped with the residual connection.}
    \label{fig:3}
\end{figure}

\subsection{Analysis on the Variance Expressing}
In this section, we investigate why each node's relative magnitude of NStd can indicate its difficulty level of node classification. We have the following speculation.

\begin{conj}\label{conj:1}
Nodes with higher ratios of intra-class neighbors or larger degrees tend to have larger NStd after multiple propagation operations.
\end{conj}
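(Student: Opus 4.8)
The plan is to replace the network by a tractable surrogate, obtain a closed form for the NStd under a structured feature model, and then read the two claimed monotonicities off the spectral structure of the normalized adjacency. First I would drop the nonlinearities — equivalently, following the influence-distribution calculation of Xu et al.~\cite{DBLP:conf/icml/XuLTSKJ18}, take expectations over the ReLU sign pattern, which merely rescales propagation coefficients by nonnegative factors and hence cannot reverse any comparison made below — and absorb the weight matrices, leaving $\bm x_i^{(k)} = \sum_j w_{ij}^{(k)}\bm x_j^{(0)}$ with $w_{ij}^{(k)} = (\hat{\bm A}^{\,k})_{ij}$ and $\hat{\bm A} := \tilde{\bm D}^{-1/2}\tilde{\bm A}\tilde{\bm D}^{-1/2}$. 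I would model the inputs class-wise: node $j$ of class $y_j$ has $\bm x_j^{(0)} = \bm m_{y_j} + \bm\epsilon_j$, where the class prototypes $\bm m_c$ are mutually near-orthogonal, centered over the $d$ coordinates, of common squared norm $R$, and $\bm\epsilon_j$ is a coordinate-wise zero-mean noise of variance $s^2$. Write $p_{ic}^{(k)} := \sum_{j:\,y_j=c} w_{ij}^{(k)}$ for the total propagation mass node $i$ places on class $c$, and set $\tilde d_i := \tilde{\bm D}_{ii}$, $\mathrm{vol} := \sum_l \tilde d_l$.

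Substituting gives $\bm x_i^{(k)} = \sum_c p_{ic}^{(k)}\bm m_c + \sum_j w_{ij}^{(k)}\bm\epsilon_j$. Because the prototypes are centered and near-orthogonal and the noise is coordinate-wise independent, the empirical variance of $\bm x_i^{(k)}$ across the $d$ coordinates concentrates, for $d$ large, around $\mathbb{E}[\sigma_i^2] \approx \tfrac{R}{d}\sum_c (p_{ic}^{(k)})^2 + s^2\,\|w_{i,:}^{(k)}\|_2^2$ — intuitively, spreading the mass over many mutually near-orthogonal prototypes produces cancellation, while concentrating it preserves magnitude. Thus $\sigma_i$ is controlled by the two scalars $\sum_c (p_{ic}^{(k)})^2$ and $\|w_{i,:}^{(k)}\|_2^2$, and it remains to show both are typically increasing in degree and in intra-class rate.

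For the degree part I would use the spectral decomposition $\hat{\bm A}^{\,k} = \bm u_1\bm u_1^\top + \sum_{t\ge 2}\lambda_t^{\,k}\bm u_t\bm u_t^\top$, whose Perron eigenvector has $u_{1,i} = \sqrt{\tilde d_i/\mathrm{vol}}$ with eigenvalue $1$. Then $w_{ij}^{(k)} = \sqrt{\tilde d_i\tilde d_j}/\mathrm{vol} + O(|\lambda_2|^{\,k})$, so $\|w_{i,:}^{(k)}\|_2^2 = \tilde d_i/\mathrm{vol} + O(|\lambda_2|^{\,k})$ and $\sum_c (p_{ic}^{(k)})^2 = c_0\,\tilde d_i + O(|\lambda_2|^{\,k})$ with $c_0 = \sum_c\big(\tfrac{1}{\mathrm{vol}}\sum_{j:\,y_j=c}\sqrt{\tilde d_j}\big)^2 > 0$. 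Both leading terms are strictly increasing in $\tilde d_i$, so once $k$ is large enough that the $O(|\lambda_2|^{\,k})$ remainder is dominated — i.e., ``after multiple propagation operations'' — larger-degree nodes have larger $\sigma_i$.

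For the intra-class part I would compare two nodes of equal degree and equal total mass $\sum_c p_{ic}^{(k)}$ that differ in how much of that mass sits on the node's own class $y_i$; since $\bm p\mapsto\sum_c p_c^2$ increases when mass is transferred toward an already-dominant coordinate (Schur-convexity on the simplex), it suffices to show a higher intra-class neighbor rate makes $(p_{ic}^{(k)})_c$ more peaked at $c=y_i$. For $k=1$ this is immediate from $p_{i,y_i}^{(1)} = \sum_{j\sim i,\,y_j=y_i}(\tilde d_i\tilde d_j)^{-1/2}$, and a short mixing/induction argument extends it to moderate $k$. The main obstacle is that the two halves of the claim live in opposite regimes of $k$: the degree effect is essentially an asymptotic, near--over-smoothing phenomenon, whereas the class-structure effect is transient and is eventually erased by the spectral-gap decay $|\lambda_2|^{\,k}$ (indeed at $k=\infty$ the constant $c_0$ above no longer involves $i$'s class at all), so the argument must be pinned to an intermediate window of $k$ whose endpoints depend on the spectral gap and the homophily level. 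A secondary technicality is that the concentration of the empirical coordinate-variance around its mean presumes many feature dimensions with near-independent coordinates, an idealization of real embeddings; but, as noted, the ``expectation over ReLU'' reduction only rescales the $w_{ij}^{(k)}$ by nonnegative factors, so it does not threaten any of the monotonicities used above.
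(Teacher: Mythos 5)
The paper treats this statement only as a conjecture: in the main text it is supported empirically (the de-parameterized 32-step propagation experiment of Table~\ref{tab:1}), and the promised theoretical analysis lives in an appendix that is not part of the source provided, so I can only judge your argument on its own terms. Your linearization matches the paper's de-parameterized setting in spirit, and the degree half of your argument is essentially sound: since $\hat{\bm A}=\tilde{\bm D}^{-1/2}\tilde{\bm A}\tilde{\bm D}^{-1/2}$ has Perron eigenvector with entries $\sqrt{\tilde d_i/\mathrm{vol}}$ and eigenvalue $1$, one has $(\hat{\bm A}^{k})_{ij}\to \sqrt{\tilde d_i\tilde d_j}/\mathrm{vol}$, so $\bm x_i^{(k)}$ approaches $\sqrt{\tilde d_i}$ times a node-independent vector and the NStd scales like $\sqrt{\tilde d_i}$; this conclusion does not even need your prototype-plus-noise feature model, which you only use to make the variance bookkeeping explicit.

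The genuine gap is in the intra-class half. First, the key monotonicity --- that a higher intra-class neighbor rate makes the class-mass vector $(p_{ic}^{(k)})_c$ more peaked at $c=y_i$ --- is only verified at $k=1$; for $k\ge 2$ the peakedness depends on the class composition of multi-hop neighborhoods, i.e.\ on a global homophily assumption (an SBM-like model, say) that you never state, and the ``short mixing/induction argument'' is not automatic: a node whose one-hop neighbors are all same-class but sit on a class boundary can have its mass diffused off-class at step two, so node $i$'s own intra-class rate alone does not control $(p_{ic}^{(k)})_c$. Second, the Schur-convexity comparison requires equal total mass $\sum_c p_{ic}^{(k)}$, and for the symmetric normalization this row sum at finite $k$ is not fixed by the degree alone, so the conditioning ``equal degree and equal total mass'' silently narrows the claim. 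Third, you correctly observe that the two halves live in opposite regimes of $k$ (the degree effect needs $|\lambda_2|^{k}$ negligible, the class effect needs it not to be), but you leave the existence of a common window as a hope rather than exhibiting conditions on the spectral gap and homophily strength under which it is nonempty --- note the paper's own experiment uses $k=32$, where the class effect survives on Cora/Citeseer presumably because of slow mixing, which is exactly the regime your error bounds do not quantify. As written, your argument is a plausible heuristic that is consistent with the empirical pattern in Table~\ref{tab:1} (degree dominating on Pubmed, intra-class rate on Cora/Citeseer), but to turn it into a proof of the conjecture you would need to fix a generative graph model, prove the peakedness induction within it, and show the window of $k$ where both monotonicities hold is nonempty.
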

We verify this conjecture by using real-world datasets. To avoid the influence of transformation, we use de-parameterized 32-layer GCNs that only perform propagation at each layer. For the analysis of the asymptotic behavior of such linear GCNs, the interested reader is referred to~\cite{DBLP:conf/nips/WangWYL21}. Concretely, we perform 32 times propagation operations to the raw input features, then report the intra-class ratio and degree of different node sets in Table \ref{tab:1}. When the difference in intra-class ratio is relatively large, NStd monotonically increases as the intra-class ratio increases (Cora and Citeseer). On the other hand, NStd monotonically increases as node degree increases (Pubmed, because the mean node degree of Pubmed is significantly larger than that of the other two datasets.). 
\begin{table}[t]
\centering
\caption{Mean node degree and intra-class ratio (ICR) statistics for the different node sets. +$k$\% corresponds to the nodes with the largest $k\%$ Nstds after 32 times of de-parameterized propagation.}
\label{tab:1}
\begin{tabular}{c|c|ccc}
\midrule
\midrule
\multicolumn{2}{c|}{Metric}         & Cora & Citeseer & Pubmed \\
\hline
\multirow{4}{*}{ICR} &+1\% &\textbf{0.925}   & \textbf{1.0}    &0.783 \\   
\cline{2-5}
                     &+5\%   &0.892    &0.856   &0.804\\    
                     \cline{2-5}
                     &+30\%   &0.821    & 0.761  &0.803   \\
                     \cline{2-5}
                     &+80\%   &0.814   &0.745    &0.804\\

\hline
\multirow{4}{*}{Degree} &+1\% &7.44   &0.76   &\textbf{37.0}\\ 
\cline{2-5}
                        &+5\% &4.23   &1.51   &24.60\\  
                        \cline{2-5}
                        &+30\%    &6.40   &1.73   &10.87\\
                        \cline{2-5}
                        &+80\%    &4.50   &3.01   &5.36\\
\midrule
\end{tabular}
\end{table}
Such observation suggests that the propagation tends to make the NStd of nodes with larger intra-class ratios or degrees relatively larger than other nodes in a graph.
\begin{figure*}[t]
    \centering
    \subfigure[Raw] {\includegraphics[width=0.47\columnwidth]{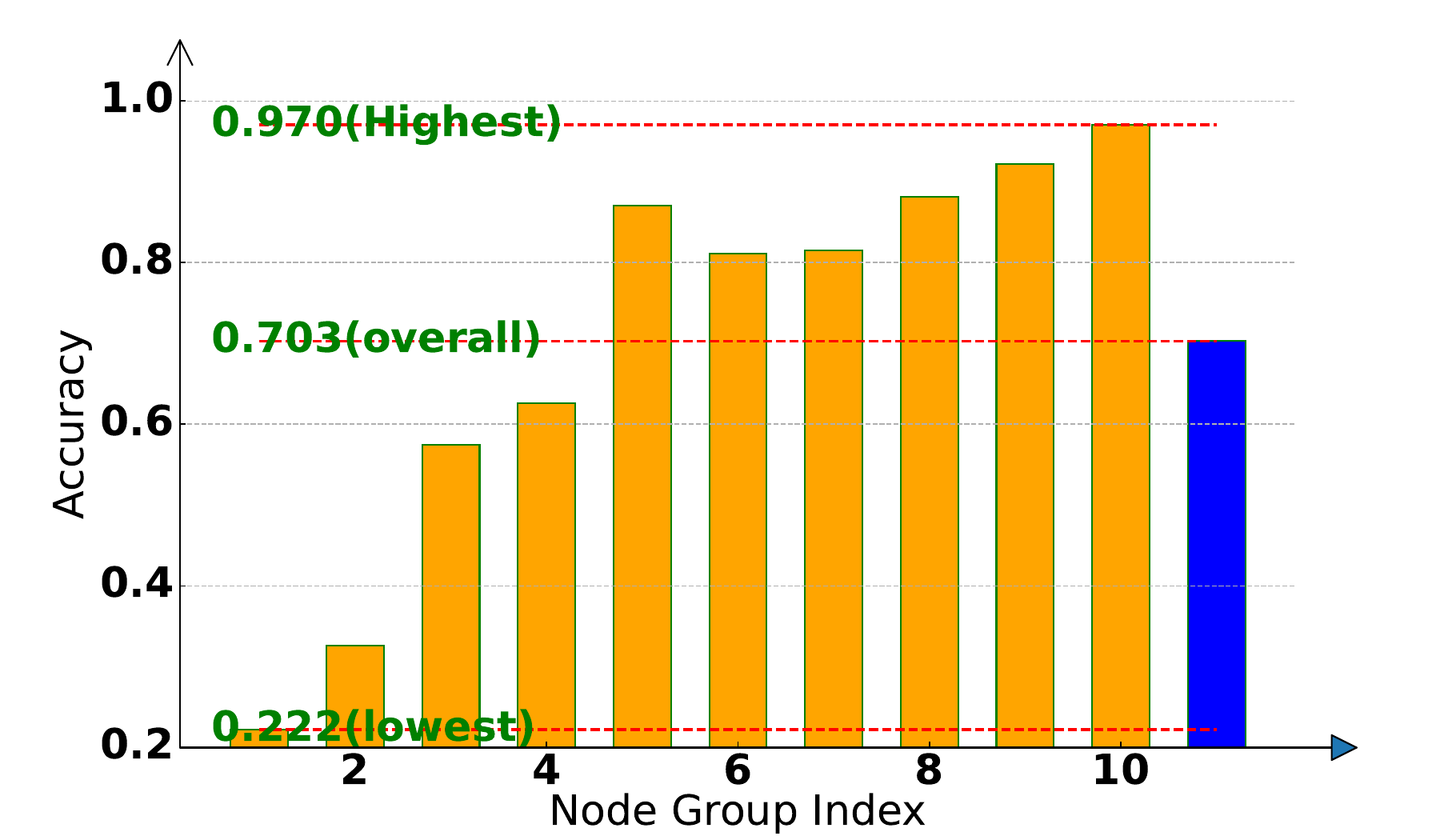}\label{fig5:a}}
    \subfigure[Unit Variance]
    {\includegraphics[width=0.47\columnwidth]{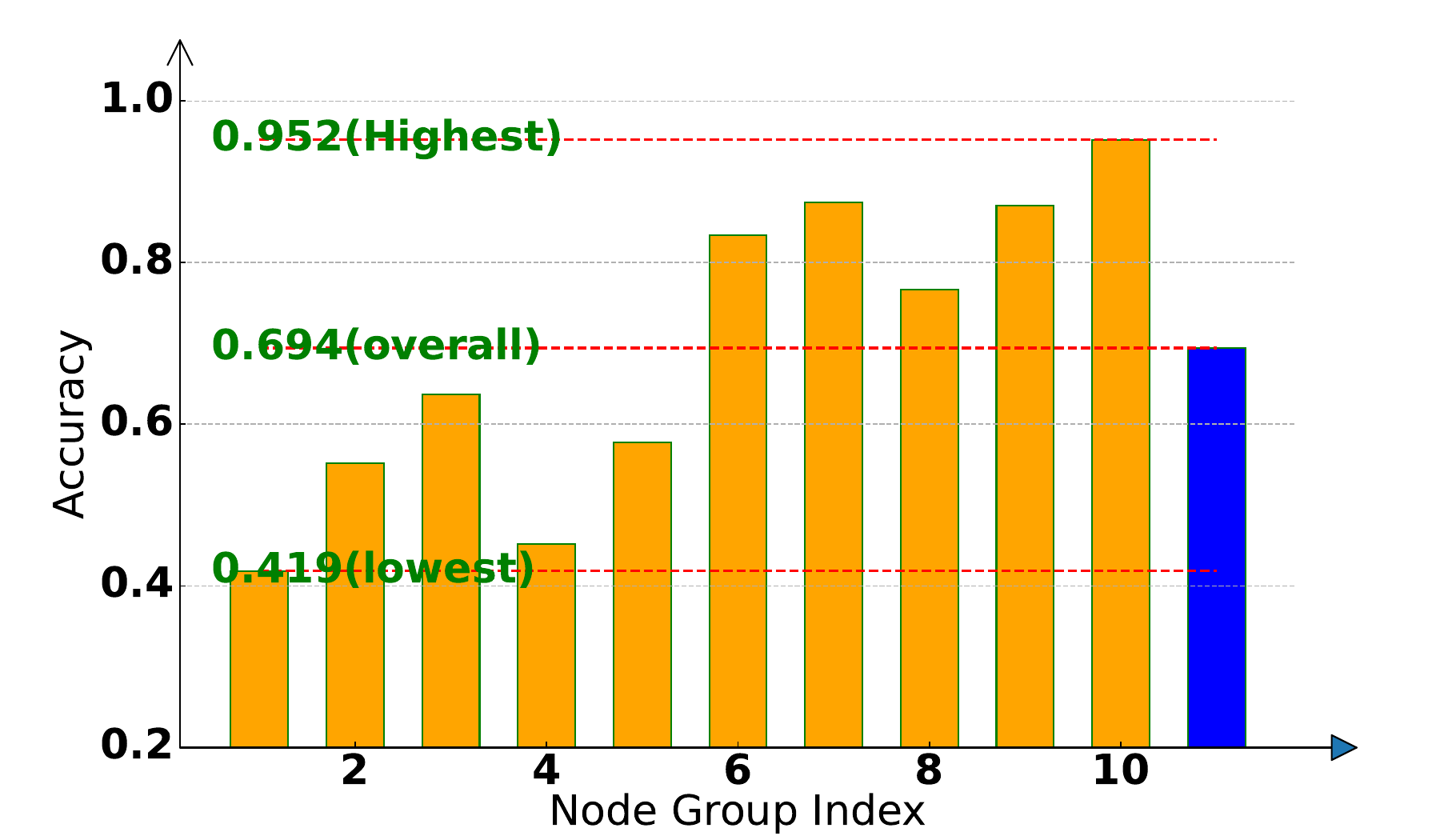}}
    \subfigure[Square Root]
    {\includegraphics[width=0.47\columnwidth]{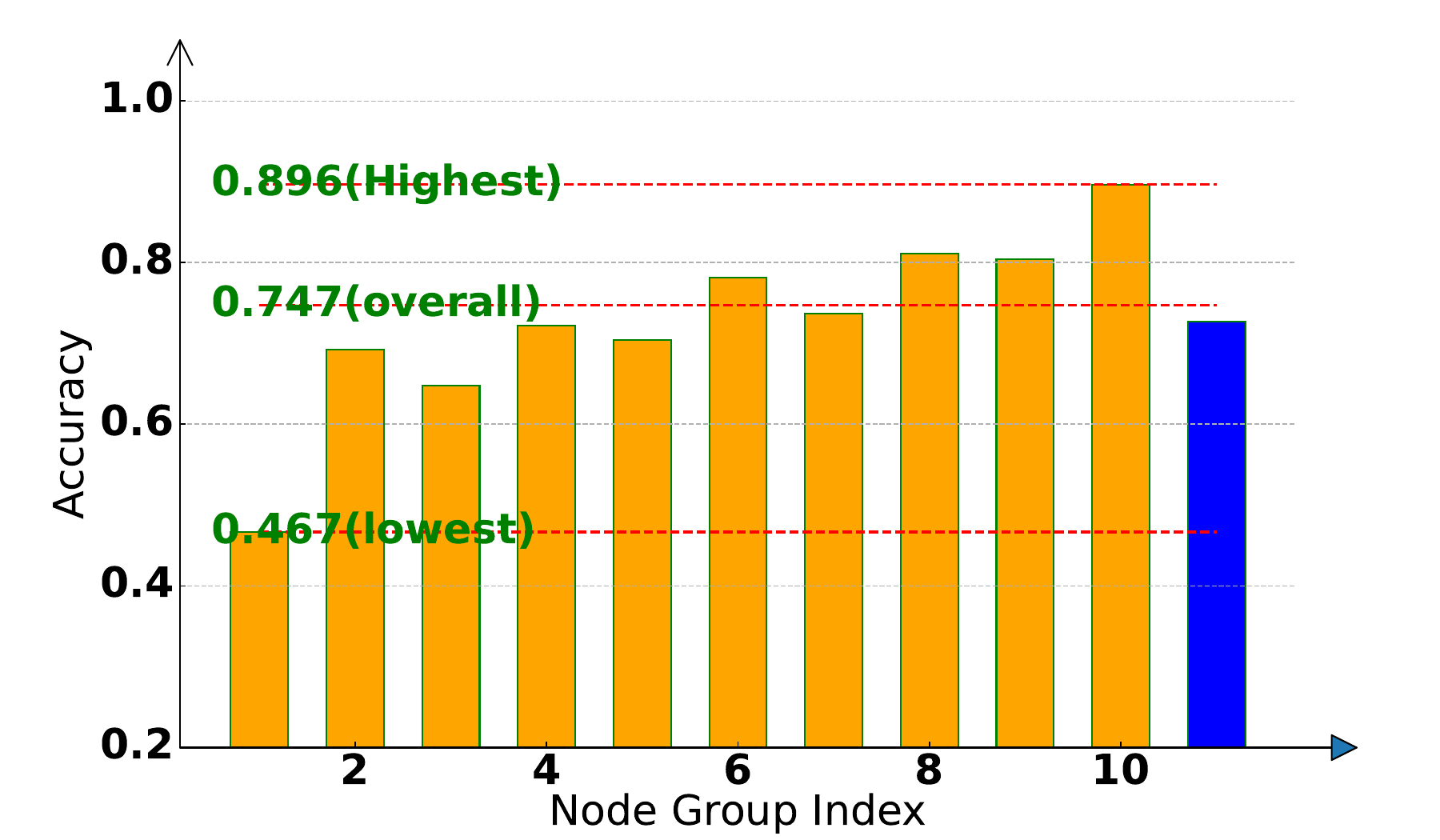}\label{fig5:c}}
    \subfigure[Reciprocal Square Root]
    {\includegraphics[width=0.47\columnwidth]{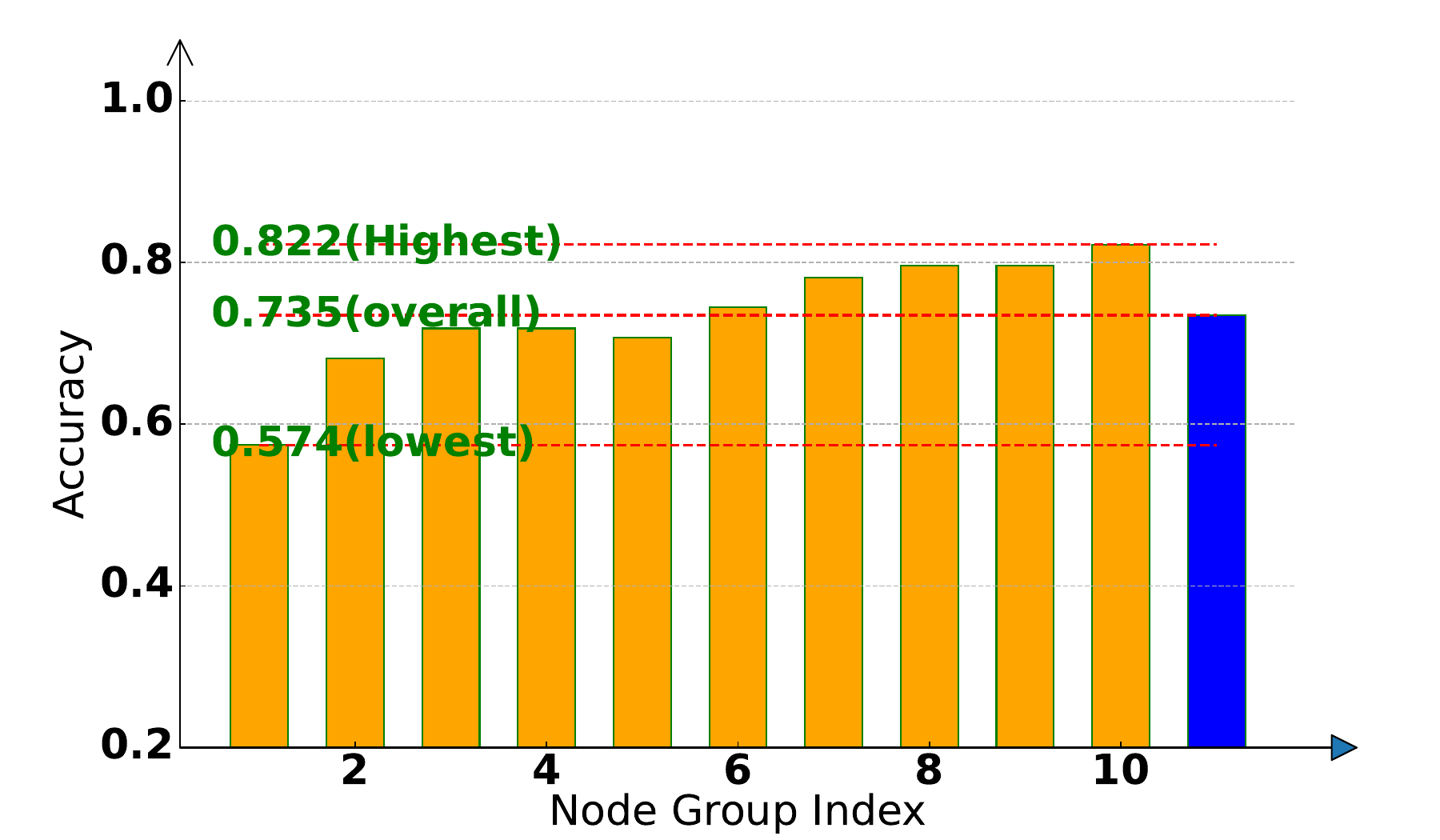}
    \label{fig5:d}}
    \caption{Accuracy of different node sets, where the indices of node set are ranked from smaller to larger NStd. For example, nodes in group 0 have the smallest 10\% NStds. Blue bars correspond to the overall accuracy.}
    \label{fig:5}
\end{figure*}

\subsection{Our Remedy: Distribution Reshaping}
Motivated by the similarity between the NStd distribution and the node degree distribution, and the positive correlation between NStd and classification accuracy, we propose to leverage the NStd distribution to resolve the long-tailed graph data classification problem. Since the NStd distribution is similar to the node degree distribution, reshaping the NStd distribution may implicitly reshape the node degree distribution, thus benefiting the classification accuracy of tail nodes. Our main idea is to improve the classification accuracy of tail nodes by reshaping the NStd distribution from long-tailed to normal-like. 

Power transformation is a classic technique to improve normality~\cite{tukey1957comparative}, typically including square root transformation, reciprocal (negative exponent) transformation, and identity transformation. Among them, the reciprocal transformation will reverse the order of the original data. We study the effects of several transformations from the perspectives of performance and normality. Specifically, we apply power transformation to node representations at each layer. For instance, assume $\bm x_i$ is the node representation of node $v_i$ and $\sigma_i$ is the NStd computed with $\bm x_i$, then performing square root transformation can be formulated as $\text{scale}(\bm x_i) = \frac{\bm x_i}{\sigma_i^{0.5}}$. We report the mean accuracy of node groups with different magnitudes of NStds in Figure \ref{fig:5}. We can observe the inferior performance of tail nodes when no transformation is applied (Figure \ref{fig5:a}). Both square root and reciprocal square root transformations can substantially improve the overall and tail performance. We may understand the mechanism of power transformation by observing the performance change of the head and tail. Both square root and reciprocal square root transformation increase the relative magnitudes of the NStds of the tails and reduce that of the heads, which gives rise to performance improvement and degradation for the tail and head, respectively. Thanks to the long-tailed distribution, although we lose some accuracy of the head, we still achieve overall performance improvement. Reciprocal transformation is the best to uniform the performance across heads and tails. However, square root transformation improves the overall performance the most. An intuitive explanation is that reciprocal transformation reverses the order of NStd across the head and tail nodes, making the NStds of tail nodes larger than that of head nodes. Hence, the models are forced to pay more attention to tail nodes and reduce the bias toward head nodes. However, such severe transformation leads to huge instability. We can see the huge performance drop of head nodes in Figure~\ref{fig5:d}. Therefore, square root transformation is more suitable. This inspires us to introduce positive power transformation as a component of the scale operation.

\subsection{An Interpretation from the Perspective of Influence Distribution}
We here investigate the effect of the node-wise scale from the perspective of influence distribution. For the discussion of the effect of scale in optimization, see, e.g., ~\cite{DBLP:conf/icml/IoffeS15}. Since the scale is applied node-wise, the magnitudes of node representations are scaled differently and individually. Hence the influence distribution is modified. The concept of influence function is initially introduced in~\cite{DBLP:conf/icml/KohL17}, then employed to analyze the influence distribution of each node by~\cite{DBLP:conf/icml/XuLTSKJ18}. For any node $x$, influence distribution captures the relative influences of all other nodes. For completeness, we also give the definition of influence distribution.
\newtheorem{definition}{Definition}
\begin{definition}
The influence score $I(x,y)$ that measures the influence of node $y$ on $x$ is the sum of the absolute values of the entries of the Jacobian matrix $[\frac{\partial h_x^{(k)}}{\partial h_y^{(0)}}]$, where $h_x^{(k)}$ is the learned node representation of node $x$ at the $k$-th (last) layer and $h_y^{(0)}$ is the input feature of node $y$. The influence distribution $I_x(y)$ of $x$ is calculated as $I_x(y) = \frac{I(x,y)}{\sum_z I(x,z)}$.
\end{definition}
In expectation, the influence distribution of $x$ is equivalent to the k-step random walk distribution with $x$ as the starting point~\cite{DBLP:conf/icml/XuLTSKJ18}, which is \textit{static} and only correlated to the graph topology. Here, \textit{static} means that the expected influence distribution remains unchanged during the training process. The lack of adaptability and the irrelevance with node features harm the performance as the similarity between two nodes is determined by structure and features. The idea is that the relative influence of any node $y$ on any node $x$ should be learnable and also exploit the node features, which we achieve by incorporating the node-wise scale.
\begin{prop}\label{theorem:1}
The node-wise scale endows the influence distribution with dynamic and adaptability which tends to improve the influence of smaller-NStd nodes.
\end{prop}
This provides an interpretation using the influence distribution. The node-wise scale penalizes the large-NStd nodes by reducing their impact on other nodes, leading to less impact on the training of the whole model.
We visualize examples in Figure \ref{fig:influence_distribution}. The influence distribution of GCN only depends on the graph structure, \textit{i}.\textit{e}., the relative influence is inversely proportional to the distance between two nodes. However, when the scale is applied, the influence of the two nodes in the upper left (have degrees 1 and 2, respectively) is increased. This observation is consistent with the above analysis.
\begin{figure}[t]
    \centering
    \subfigure[GCN (random walk)] {\includegraphics[width=0.48\columnwidth]{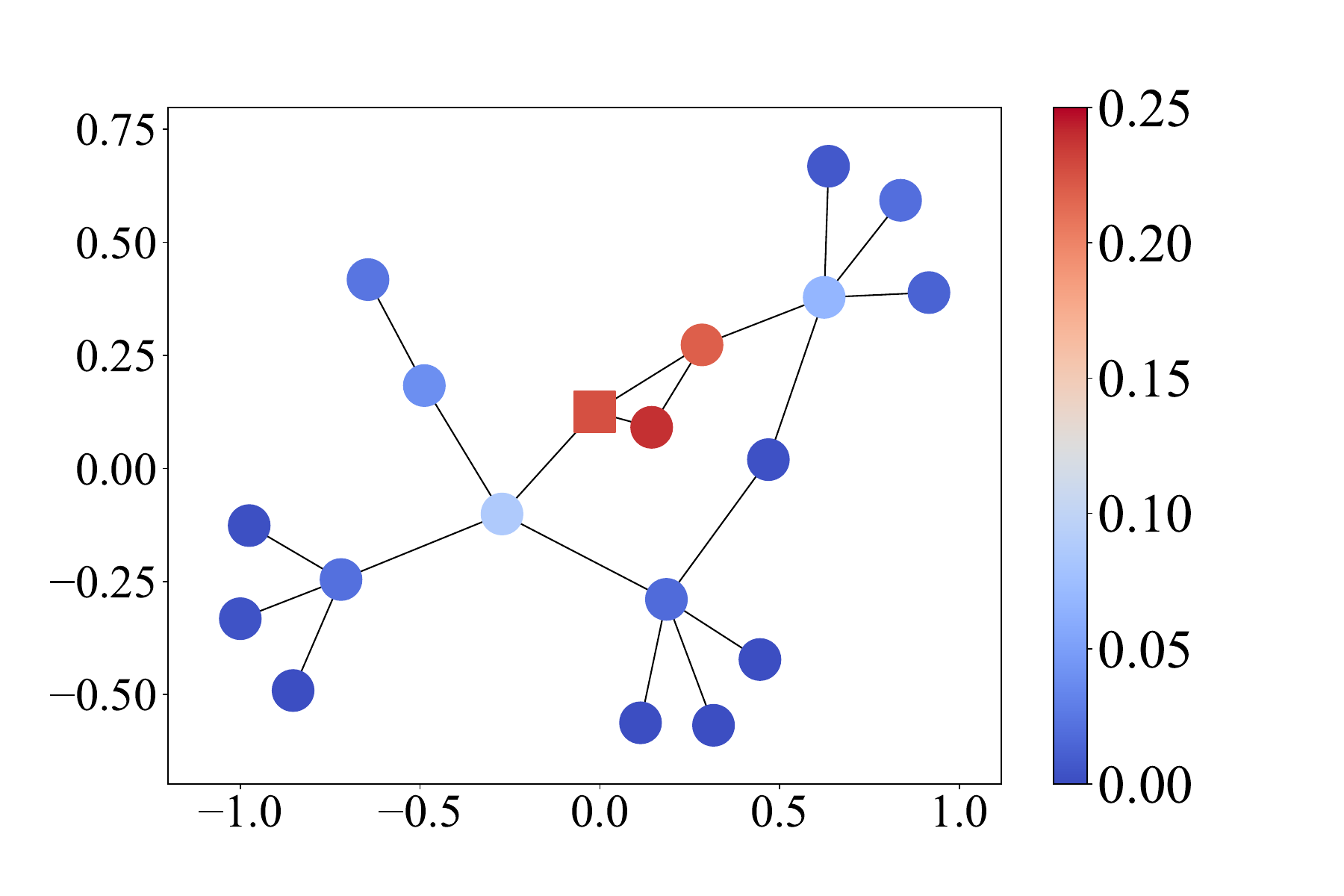}}
    \subfigure[GCN+Scale]
    {\includegraphics[width=0.48\columnwidth]{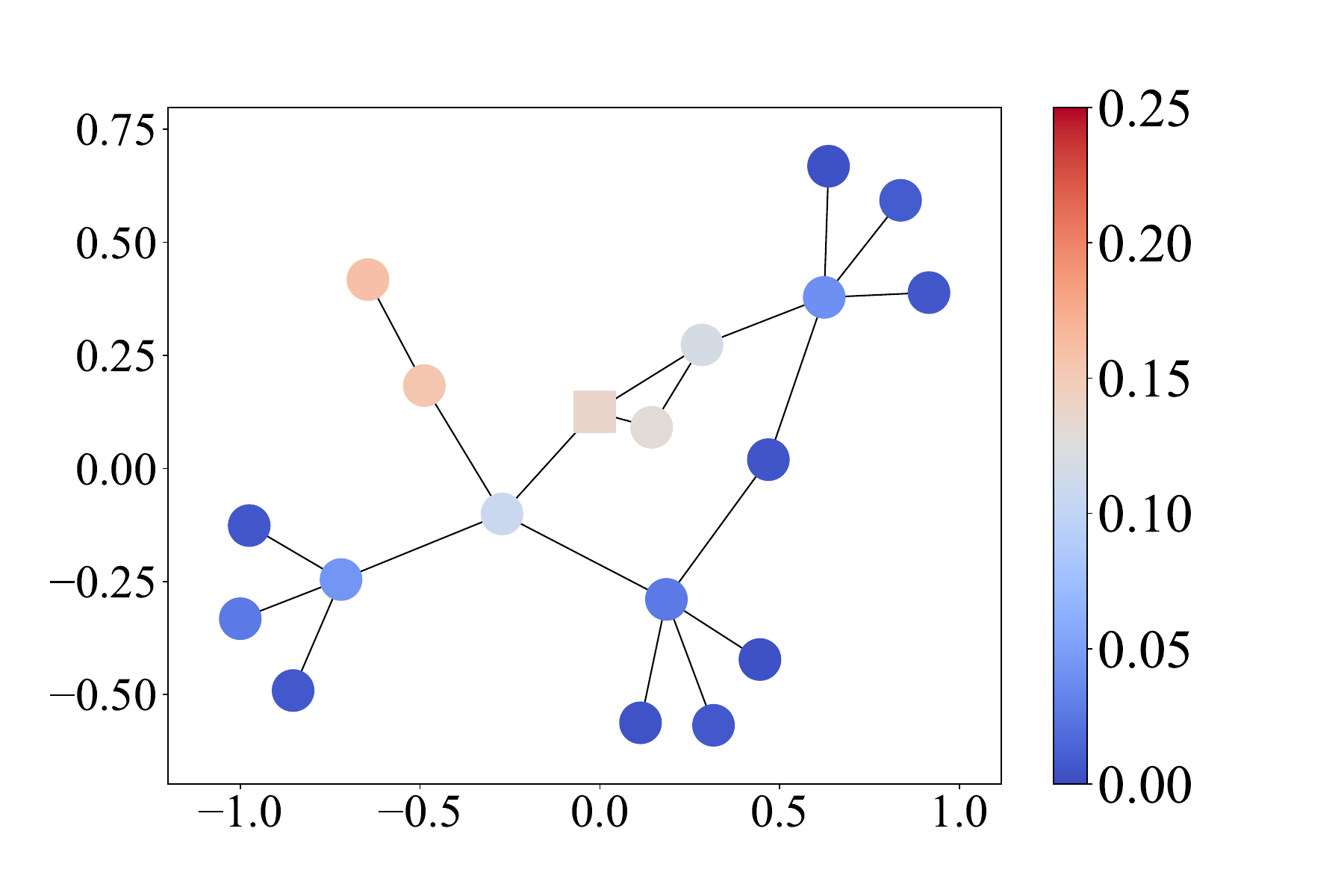}}
    \caption{Influence distribution of a target node (the 5th node of Cora dataset). We only consider its neighbors within 3 hops. The target node is labeled square.}
    \label{fig:influence_distribution}
\end{figure}


\section{The proposed ResNorm}\label{sec:5}

\subsection{Distribution-Reshaping Scale}
In this part, we formulate the scale operation of ResNorm. We first include power transformation, \textit{i}.\textit{e}., $\bm x_i'=\frac{\bm{x_i}}{\sigma_i^e}$, where $0<e<1$ is a hyper-parameter. Note that after power transformation, the residual NStd for node $v_i$ is $\sigma_i^{1-e}$. We take the mean of the partial NStd to further transform the distribution into more normal-like by $\frac{\bm x_i'\cdot \overline{\sigma^\gamma}}{\sigma_i^\gamma}$, where $\overline{\sigma^\gamma}$ is the mean of all nodes' $\sigma^\gamma$ and $0<\gamma<1$ a hyper-parameter. With this setting, $e$ controls to what extent we normalize the magnitudes of NStd, while $\gamma$ controls the extent of reshaping the distribution of NStd into normal-like. This separation makes the scale flexible in determining the contributions of these two main effects.
Formally, the scale operation of ResNorm is defined as
\begin{equation}
    \text{Scale}(\bm x_i) = \frac{\bm x_i \cdot \overline{\sigma^{\gamma}}}{\sigma_i ^ {e+\gamma}}.
\end{equation}

After scaling, the NStd of node $v_i$ becomes $\sigma_i^{(1-e-\gamma)}\cdot \overline{\sigma^\gamma}$, and the magnitude is similar to $\sigma_i^{(1-e)}$, which is partially normalized.
One of the most desirable effects of normalization is the ``scale-invariant'' property: the output remains unchanged even if the weights are scaled. As a result, normalization implicitly tunes the learning rate~\cite{DBLP:conf/icml/IoffeS15} and decouples the optimization of direction and length of the parameters~\cite{DBLP:conf/aistats/KohlerDLHZN19}.

\begin{prop}\label{prop:2}
Let $\bm W$ be a weight matrix in a linear layer, $\bm W'=\delta \bm W$ the scaled weight matrix where $\delta>0$, $\bm{x_i}$ a column feature vector, $f(\cdot)$ is an arbitrary activation function, such that $\bm{h}=f(Scale(\bm W \bm x_i))$ denotes the original output, and $\bm h'=f(Scale(\bm W' \bm x_i)) $ denotes the output after the weight matrix is scaled by $\delta$. Then we have $| \frac{\bm h'}{\bm h}-1| \le | \frac{\bm W'}{\bm W}-1|$.
\end{prop}
Since $| \frac{\bm h'}{\bm h}-1| \le | \frac{\bm W'}{\bm W}-1|$, the fluctuation of output caused by the weight scaling is alleviated.

\begin{figure}[t]
    \centering
    \includegraphics[width=0.5\columnwidth]{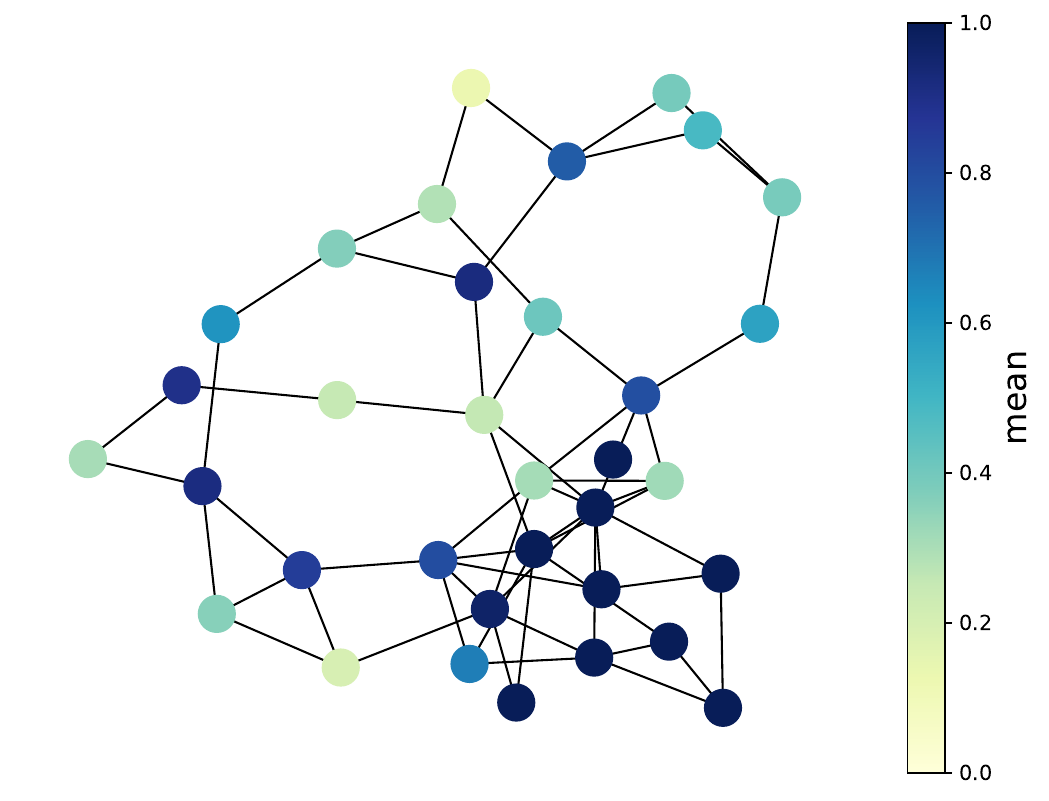}
    \caption{An illustration for showing that the mean statistics contains topology information using the KarateClub dataset. We can identify the nodes within the subgraph in the bottom right only using their means.}
    \label{fig:6}
\end{figure}

\subsection{Node-Wise Shifting for Alleviating Over-Smoothing}
Take GCNs as an example, $\mu_i = \sum_{j \in \mathcal{N}(i) \cup i}\frac{1}{\sqrt{d_i d_j}} \mu_j$, where $\mu_i = \frac{1}{d} \sum_{j=1}^d \bm x_{i,j}$. The mean statistics contain information about node degree and neighboring relationship. We visualize an example in Figure~\ref{fig:6}. The standard shift (shifting by subtracting the mean) may cause structural information loss. However, a recent work~\cite{DBLP:conf/icml/CaiLXHL021} suggests that subtracting the mean makes the optimization curvature of $\bm W$ smoother. They introduce a learnable parameter to achieve a trade-off between retaining structural information and boosting training. 
In this paper, we mainly focus on the relationship between over-smoothing and shift operation, which is less considered in previous work. To reveal the effects caused by shifting, we adopt the widely used metric of smoothness.
\begin{definition}
The smoothness score of the node embedding matrix $\bm X \in \mathbb{R}^{d \times n}$ is defined as 
\begin{equation}
\begin{split}
    Score(\bm X) &= tr(\bm X \bm{LX}^T) \\ &=\frac{1}{2}\sum_{i=1}^n \sum_{j\in \mathcal{N}(i) \cup i} \Vert \frac{\bm x_i}{\sqrt{1+d_i}} - \frac{\bm x_j}{\sqrt{1+d_j}} \Vert_2^2,\\
\end{split}
\end{equation}
where $\bm L = \bm I - \bm P = \bm I - \tilde{\bm D}^{-\frac{1}{2}}\tilde{\bm A}\tilde{\bm D}^{-\frac{1}{2}}$ is the graph laplacian.
\end{definition}
\newtheorem{Lemma}{Lemma}

A small value of the smoothness score implies that the node representations may suffer from over-smoothing. We remove the non-linear activation for simplifying the analysis.
\begin{Lemma}(Lemma 1 in~\cite{DBLP:conf/nips/ZhouHZCLCH21})
    The Smoothness score of the output of a GCN layer is bounded as $S_\text{min} C \leq Score(\bm {WXP})\leq S_\text{max} C$, where $C=tr(\bm {XPL}\bm{P}^T\bm X^T)$, $S_\text{min}$ and $S_\text{max}$ denote the minimum and maximum singular values of $\bm W$, respectively.
\end{Lemma}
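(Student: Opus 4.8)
The plan is to work directly from the explicit sum-of-squared-differences form given in the definition of $Score$, rather than from the trace form. With the nonlinearity removed, a GCN layer sends the embedding matrix $\bm X$ to $\bm{WXP}$, so if I write $\bm z_i$ for the $i$-th column of the propagated embedding $\bm{XP}$, then the $i$-th column of $\bm{WXP}$ is exactly $\bm W\bm z_i$. Substituting into the definition gives
\begin{equation}
Score(\bm{WXP}) = \frac{1}{2}\sum_{i=1}^n\sum_{j\in\mathcal{N}(i)\cup i}\Big\Vert \bm W\Big(\tfrac{\bm z_i}{\sqrt{1+d_i}}-\tfrac{\bm z_j}{\sqrt{1+d_j}}\Big)\Big\Vert_2^2 ,
\end{equation}
and the same expression with the leading $\bm W$ deleted is precisely $Score(\bm{XP})=tr(\bm{XP}\bm L\bm P^T\bm X^T)=C$. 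So the whole statement reduces to comparing $Score(\bm{WXP})$ with $Score(\bm{XP})$ term by term.

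The key step is then purely local: for any vector $\bm v$, the definition of the extreme singular values of $\bm W$ gives $S_{\min}\Vert\bm v\Vert_2\le\Vert\bm W\bm v\Vert_2\le S_{\max}\Vert\bm v\Vert_2$, hence $S_{\min}^2\Vert\bm v\Vert_2^2\le\Vert\bm W\bm v\Vert_2^2\le S_{\max}^2\Vert\bm v\Vert_2^2$. Applying this to each difference vector $\bm v=\frac{\bm z_i}{\sqrt{1+d_i}}-\frac{\bm z_j}{\sqrt{1+d_j}}$ appearing in the display and summing the inequalities — all summation weights being positive — yields $S_{\min}^2\,C\le Score(\bm{WXP})\le S_{\max}^2\,C$, which is the claim. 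An equivalent route is to rewrite $Score(\bm{WXP})=tr(\bm W^T\bm W\cdot\bm M)$ with $\bm M:=\bm{XP}\bm L\bm P^T\bm X^T$ via cyclicity of the trace, observe that $\bm M\succeq 0$ because $\bm L$ is the (PSD) normalized Laplacian, and invoke the standard sandwich $\lambda_{\min}(\bm W^T\bm W)\,tr(\bm M)\le tr(\bm W^T\bm W\bm M)\le\lambda_{\max}(\bm W^T\bm W)\,tr(\bm M)$, the latter being obtained by diagonalizing $\bm M$ and applying the Rayleigh bound to each eigenvector.

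I do not expect a genuine obstacle here: the argument is a substitution followed by a one-line bound on $\Vert\bm W\bm v\Vert_2/\Vert\bm v\Vert_2$, and it goes through for rectangular $\bm W$ as well, since only $\bm W^T\bm W$ (always square and PSD) actually enters. The only points worth a sentence of care are confirming that the normalized graph Laplacian $\bm L=\bm I-\tilde{\bm D}^{-1/2}\tilde{\bm A}\tilde{\bm D}^{-1/2}$ is positive semidefinite — which the sum-of-norms form of $Score$ already exhibits — and matching the squaring convention: the bound is most naturally stated with $S_{\min}^2$ and $S_{\max}^2$, so I would read $S_{\min},S_{\max}$ in the lemma as those squared singular values, consistent with the cited reference.
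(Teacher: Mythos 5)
The paper does not actually prove this lemma; it is imported verbatim as Lemma~1 of the cited reference \cite{DBLP:conf/nips/ZhouHZCLCH21}, so there is no in-paper proof to compare against. Your argument is a correct, self-contained derivation, and your two routes are really the same estimate seen twice: writing $Score(\bm{WXP})=tr(\bm W^T\bm W\,\bm M)$ with $\bm M=\bm{XP}\bm L\bm P^T\bm X^T\succeq 0$ and sandwiching by the extreme eigenvalues of $\bm W^T\bm W$ is the cleaner version, while the column-wise form just expands $tr(\bm M)$ into the sum of squared differences; both correctly use $tr(\bm M)=Score(\bm{XP})=C$ (with $\bm P^T=\bm P$). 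Two caveats are worth keeping explicit. First, the bound you actually prove is $S_{\min}^2 C\le Score(\bm{WXP})\le S_{\max}^2 C$; the lemma as transcribed in this paper drops the squares, so either $S_{\min},S_{\max}$ must be read as the extreme eigenvalues of $\bm W^T\bm W$ (i.e.\ squared singular values, as in the original reference) or the statement is off by an exponent --- you flag this correctly, and it should be flagged, since the downstream argument about the shift ``shrinking singular values'' silently relies on the monotone correspondence between the two readings. Second, your lower bound $\Vert\bm W\bm v\Vert_2\ge S_{\min}\Vert\bm v\Vert_2$ requires $S_{\min}^2=\lambda_{\min}(\bm W^T\bm W)$; for a rectangular $\bm W$ with a nontrivial null space this quantity is $0$ rather than the smallest of the $\min(m,d)$ singular values, so the lower bound is then trivial --- your remark that only $\bm W^T\bm W$ enters resolves this, but the convention should be stated rather than left implicit. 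Otherwise there is no gap.
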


If we formulate the shift in a matrix form, then applying shift operation is identical to multiply $\bm S$ in a form of $\bm{S (W x_i)}$, where $\bm S=\bm I_d - \frac{1}{d} \bm 1 \bm {1^T}$ and $\bm 1$ is a d-dimensional vector full of 1. We further check how this operation affects the spectrum of $\bm W$ and establish the following result. 

\newtheorem{theorem}{Theorem}
\begin{theorem}
Let $\eta_1$ and $\eta_d$ respectively be the minimum and maximum singular values of matrix $\bm{W}$, $\kappa_1$ and $\kappa_d$ respectively be the minimum and maximum singular values of matrix $\bm{SW}$. Then we have $\eta_1 \ge \kappa_1$ and $\eta_d \ge \kappa_d$.
\label{thm:1}
\end{theorem}
The above result demonstrates that the standard shift serves as a preconditioner of $\bm W$ that shrinks the maximum and minimum singular values, leading to smaller lower and upper bounds of the smoothness score. Hence, the standard shift potentially increases the risk of over-smoothing.

Based on the analysis, the standard shift induces structural information loss and accelerates over-smoothing, and therefore is not suitable for GRL. The importance of the mean statistics is different for each node, thus it is reasonable to retain a specific fraction of the mean for each node representation. On the other hand, using specific parameters for nodes with different degrees is considered a feasible way in prior works~\cite{tang2020investigating,DBLP:conf/kdd/WuHX19}. However, the degree-specific parameter strategy is too expensive due to the long-tailed distribution of node degrees. Although we can utilize the hashing-based approaches to map some degrees to the same entry of a hash table so as to reduce the parameters, it is still affordable for a normalization layer which is considered a light module. Here we utilize the shift operation to simulate the degree-specific parameter strategy. Concretely, we assign a weight matrix $\bm W_i = \bm S_i \bm W= (\bm I_d - \frac{k_i}{d}\bm 1 \bm {1^T})\bm W$ to node $v_i$, where $0<k_i<1$ and $1-k_i$ is the preservation ratio of mean for $v_i$. 
\newtheorem{remark}{Remark}
\begin{remark}
    the minimum and maximum singular values of matrix $\bm S_i \bm W$ are larger than those of matrix $\bm{SW}$.
\end{remark}

\begin{table*}[t]
\centering
\caption{Summary of classification accuracy results averaged over 10 independent trials. We highlight the best results in bold and the runner-up scores with underline.}
\label{tab:2}
\begin{tabular}{c|c|cccccccccc}
\midrule
Model &Normalization &Cora &Citeseer &Pubmed &Co-CS &Co-Physics &A-Comp. &A-Photo &Texas &ogbn-arXiv &Avg. rank\\
\hline
\multirow{5}{*}{GCN} 
&None &81.6 &69.3 &\uline{79.4} &90.8 &93.5 &82.7 &91.7 &58.2 &67.3 &3.44\\
&NodeNorm &\uline{81.8} &\uline{70.1} &78.1 &\uline{91.7} &93.8 &\uline{82.8} &\textbf{92.5} &\uline{58.7} &67.1 &2.44\\ 
&GraphNorm &76.6 &63.1 &76.7 &90.8 &87.1 &79.1 &90.2 &44.8 &\textbf{69.1} &4.67\\
&PairNorm &75.1 &63.6 &76.0 &85.7 &83.4 &74.3 &88.8 &44.5 &64.9 &5.89\\
&DGN &81.5 &69.3 &78.0 &91.2 &\uline{94.0} &81.1 &91.8 &58.3 &67.6 &3.33\\
&ResNorm &\textbf{83.4} &\textbf{73.2} &\textbf{80.0} &\textbf{93.0} &\textbf{94.6} &\textbf{84.0} &\uline{92.4}  &\textbf{60.5} &\uline{68.6} &1.22\\

\hline
\multirow{5}{*}{GAT} 
&None &81.0 &69.0 &78.1 &89.3 &91.3 &\uline{84.2} &91.0 &57.5 &67.1 &3.44\\
&NodeNorm
&\uline{82.3} &\uline{70.0} &\uline{78.3} &90.3 &91.3 &82.5 &91.5 &59.1 &68.0 &2.89\\
&GraphNorm
&77.8 &63.7 &76.4 &83.1 &87.8 &77.6 &90.4 &47.3 &\textbf{69.6} &4.67\\
&PairNorm
&75.3 &64.5 &75.4 &83.9 &86.6 &74.3 &87.3 &40.5 &64.5 &5.78\\
&DGN
&80.9 &68.0 &76.4 &\uline{90.7} &\uline{91.4} &80.5 &\uline{92.1} &\uline{59.7} &68.3 &3.11\\
&ResNorm
&\textbf{83.6} &\textbf{72.4} &\textbf{78.9} &\textbf{91.6} &\textbf{92.3} &\textbf{84.5} &\textbf{92.5} &\textbf{60.2} &\uline{68.4} &1.11\\
\hline
\multirow{5}{*}{SAGE} 
&None &80.0 &69.1 &77.2 &88.4 &92.4 &\uline{81.4} &90.0 &\uline{68.8} &66.9 &3.33\\
&NodeNorm
&\uline{81.9} &\uline{70.8} &\uline{77.9} &\uline{91.9} &93.2 &81.1 &\uline{92.0} &67.5 &66.7 &2.89\\
&GraphNorm
&76.1 &62.4 &75.6 &85.7 &84.7 &77.5 &88.9 &67.8 &\textbf{69.0} &4.44\\
&PairNorm
&74.0 &61.9 &74.4 &80.1 &82.0 &70.2 &86.5 &47.8 &64.4 &6.00\\
&DGN
&80.6 &68.2 &76.9 &89.9 &\textbf{93.4} &80.2 &90.8 &68.1 &\uline{67.0} &3.00\\
&ResNorm 
&\textbf{82.6} &\textbf{72.9} &\textbf{79.3} &\textbf{93.2} &\uline{93.3} &\textbf{83.4} &\textbf{92.5} &\textbf{69.4} &\uline{67.0} &1.33\\
\hline
\midrule

\end{tabular}
\end{table*}

Motivated by the fact that larger degrees are generally associated with larger NStds, we set $k_i = 1-\frac{\sigma_i}{\sigma_{max}}$, where $\sigma_{max}$ is the maximum NStd of all nodes. The preservation ratio of the mean for $v_i$ is $\frac{\sigma_i}{\sigma_{max}}$, which is proportional to the NStd. Mathematically, the shift component of ResNorm is defined as $\text{Shift}(\bm x_i) = \bm x_i -  (1-\frac{\sigma_i}{\sigma_{max}}) \mu_i$.
Combining the two components, ResNorm can be formulated as
\begin{equation}
    \text{ResNorm}(\bm{x_i}) = \frac{(\bm x_i - (1-\frac{\sigma_i}{\sigma_{max}})\mu_i) \cdot \overline{\sigma^{\gamma}}}{\sigma_i ^ {e+\gamma}}.
\end{equation}
Unlike the common setting, we do not introduce the learnable affine parameters that may lead to over-fitting since the shift and scale of ResNorm are applied node-wisely rather than channel-wisely.

\section{experiments}\label{sec:6}
This section evaluates the effectiveness of ResNorm. 
\subsection{Experimental Settings}
Following the literature, we adopt the following widely-used datasets: three citation networks including Cora, Citeseer, Pubmed~\cite{sen2008collective}, two co-author networks including Coauthor Physics and Coauthor CS,~\cite{shchur2018pitfalls}, two Amazon product networks including Amazon-computers and Amazon-Photo~\cite{shchur2018pitfalls}, a heterophilious dataset Texas~\cite{pei2020geom}, a large-scale dataset ogbn-arXiv~\cite{DBLP:journals/qss/WangSHWDK20}, and three social networks including Squirrel~\cite{DBLP:conf/iclr/PeiWCLY20}, Actor~\cite{DBLP:conf/iclr/PeiWCLY20}, and Email~\cite{DBLP:conf/kdd/YinBLG17}. The statistical information of the datasets is presented in Table\ref{tab:dataset_stat}.
\begin{table}
    \centering
        \caption{Dataset statistics.
}
    \begin{tabular}{c c c cc}
    \hline
    &\textbf{\#Nodes} &\textbf{\#Edges} &\textbf{\#Features} &\textbf{\#Classes} \\
       \hline
    \textbf{Cora} &2,708 &5,278 &1,433 &6 \\
    \textbf{Citeseer}  &3,327 &4,676 &3,703 &7 \\
    \textbf{Pubmed} &19,717 &44,327 &500 &3 \\

    \textbf{Coauthor CS} &18,333
&163,788
&6,805
&15 \\
    \textbf{Coauthor Physics}&34,493
&495,924
&8,415
&5  \\
    \textbf{Amazon Computers} &13,752
&491,722
&767
&10 \\
    \textbf{Amazon Photo} &7,650
&238,162
&745
&8 \\
    \textbf{Texas}  &183 &295 &1,703 &5 \\
    \textbf{ogbn-arxiv} &169,343	&1,166,243&128	&40 \\
     \textbf{Actor}  &7,600 &26,752 &931 &5\\
    \textbf{Squirrel}  &5,201 &198,493 &2,089 &5\\
       \textbf{Email} &1,005 &25,571 &128 &42 \\
    \hline

    \end{tabular}
    \label{tab:dataset_stat}
\end{table}
We implement all the methods and GNN backbones with Pytorch Geometric Library~\cite{DBLP:journals/corr/abs-1903-02428}. Experiments are conducted using a single 24G NVIDIA GEFORCE RTX 3090. We perform a grid search to tune the hyper-parameters. The search space for each hyper-parameter is as follows: dropout ratio $\in \left[0,0.9 \right]$, $\beta \in \left[0,0.7\right]$, $\alpha \in \left[0,0.7\right]$, hidden units $\in \left\{ 32,64,128\right\}$, weight decay $\in \left\{ 5e-4,1e-4,1e-5\right\}$, learning rate $\in \left\{ 0.01,0.02,0.005\right\}$. For training, we select the Adam optimizer and the best model checkpoint with the minimum validation loss. 

\subsection{Baselines}
The baselines can be divided into two categories.

\textbf{Graph-based Normalization} Since no previous work was dedicated to addressing the long-tailed issues via normalization. We compare ResNorm (RN) with 4 graph-based normalization methods, including NodeNorm (NN)~\cite{DBLP:conf/cikm/ZhouDWLHXF21}, GraphNorm (GN)~\cite{DBLP:conf/icml/CaiLXHL021}, PairNorm (PN) ~\cite{DBLP:conf/iclr/ZhaoA20}, and Differentiable Group Normalization (DGN)~\cite{DBLP:conf/nips/Zhou0LZCH20}. We exclude the results of the most famous normalization methods BatchNorm and LayerNorm as they have been shown to be significantly outperformed by GraphNorm in ~\cite{DBLP:conf/icml/CaiLXHL021}. 
For a comprehensive evaluation, all the methods are evaluated over the three most representative GNN frameworks, \textit{i}.\textit{e}., GCN~\cite{DBLP:conf/iclr/KipfW17}, the upgraded version of graph attention networks (GATv2)~\cite{DBLP:journals/corr/abs-2105-14491}, and a sampling-based approach GraphSage~\cite{DBLP:conf/nips/HamiltonYL17}. 

\textbf{Tail Refinement GNNs} Additionally, we incorporate four methods specifically aimed at enhancing the performance of tail nodes: Tail-GNN\footnote{\url{https://github.com/shuaiOKshuai/Tail-GNN}}~\cite{DBLP:conf/kdd/ZhengMGZ21}, meta-tail2vec~\cite{DBLP:conf/cikm/LiuZFZH20}, DEMO-Net~\cite{DBLP:conf/kdd/WuHX19}, and role2vec~\cite{DBLP:journals/tkde/AhmedRLWZKE22}. Tail-GNN explicitly transfers the information from head to tail. Meta-tail2vec
employs a two-stage framework for robust tail node embedding. DEMO-Net differentiates nodes of varying degrees by employing degree-specific transformations, while role2vec distinguishes node roles based on structural features, including degree. They are all different from the ResNorm which reshapes the NStd distribution.

Specifically, we apply the normalization as follows,
\begin{equation}
   \mathbf{H}^{(l+1)} = \text{Relu}(\text{NormLayer}(\text{ConvLayer}(\mathbf{H}^{(l)}, \mathbf{A}))),
\end{equation} where ConvLayer denotes a type of graph convolution layer (GCN, GAT, or GraphSAGE), NormLayer denotes a type of normalization layers, $\mathbf{H}^{(l)}$ is the node representation matrix at the $l$-th layer, and $\mathbf{A}$ is the adjacency matrix.

\begin{table}[t]
\centering
\caption{Average test accuracy of tail nodes over 10 trails on the Squirrel, Actor, Email, and Coauthor-CS datasets. Note that the results of baselines are taken from~\cite{DBLP:conf/kdd/ZhengMGZ21}}.
\label{tab:tail-gnn}
\begin{tabular}{c|c|c|c|c}
\midrule
 Method &Squirrel &Actor &Email &Coauthor-CS\\
 \midrule
 
 GCN  &24.8   &29.7 &57.9  & 88.4  \\
 \textbf{ResNorm-GCN} &\textbf{37.1} &29.3 &59.8 &93.0\\
DEMO-Net &28.3  &28.4  &56.9 &90.8 \\
role2vec &26.3  &23.1 &44.9 &62.7 \\
 Meta-tail2vec &25.1 &29.7 &57.1 &89.3 \\
Tail-GCN &29.3 &34.8 &59.2 &\textbf{93.7}\\
\textbf{Tail-GNN+ResNorm} &34.2 &\textbf{35.5} &\textbf{60.1} &93.6\\
\midrule    
\end{tabular}
\end{table}

\subsection{Performance Results}
The node classification results regarding normalization methods are reported in Table \ref{tab:2}, from which we can make two observations. (1) The results show that ResNorm outperforms other normalization methods in most settings, which demonstrates the effectiveness of ResNorm on node classification tasks. Although the preliminary analysis (e.g., Sec.\ref{sec:5}) is mostly based on GCN, the consistent performance improvement validates the effectiveness of ResNorm in working with prevalent GNN frameworks. It is worth noting that although GraphSage utilizes a sampling-based strategy to perform graph convolution, large-degree nodes still can receive information from more distinct neighboring nodes in expectation. We argue that the higher test accuracy of ResNorm comes from the accuracy improvement of the major tail nodes. 
(2) ResNorm is the only normalization method that can consistently achieve better performance compared to GNN backbones (none normalization). GraphNorm accelerates the training of GNNs with the cost of accuracy. Although GraphNorm performs well on ogbn-arXiv, the average rank of GraphNorm is around 4.5 on all the datasets. PairNorm and DGN are tailor-made for addressing over-smoothing. However, they cannot enable deep GNNs to outperform their shallow counterparts. Hence, they achieve their best performance with 2 layers and the performance is worse than the GNN backbones.

The results for tail nodes are presented in Table \ref{tab:tail-gnn}, following the same definition of tail node (degree $< 5$) and experimental settings as \cite{DBLP:conf/kdd/ZhengMGZ21}. It is important to note that these models can often be complex, incorporating multiple components and auxiliary losses. As a result, they encounter challenges when scaling to large graphs. For instance, Tail-GCN requires over 20 minutes to converge on Coauthor-CS when utilizing an Nvidia P100 GPU. In contrast, ResNorm-GCN achieves convergence in less than 2 minutes. The results in Table \ref{tab:tail-gnn} demonstrate the effectiveness of ResNorm, as ResNorm-GCN outperforms DEMO-Net, role2vec, and Meta-tail2vec across the employed datasets in terms of node classification accuracy for tail nodes. Although GCN+ResNorm generally performs worse than Tail-GNN, incorporating ResNorm into Tail-GNN leads to further improvements. Specifically, Tail-GNN+ResNorm achieves approximately a 16\% improvement over Tail-GNN on the Squirrel dataset. This provides further evidence that ResNorm is effective and can be integrated to further enhance existing models.

\begin{figure*}[t]
    \centering
          \subfigure[GCN (Cora)]
    {\includegraphics[width=0.45\columnwidth]{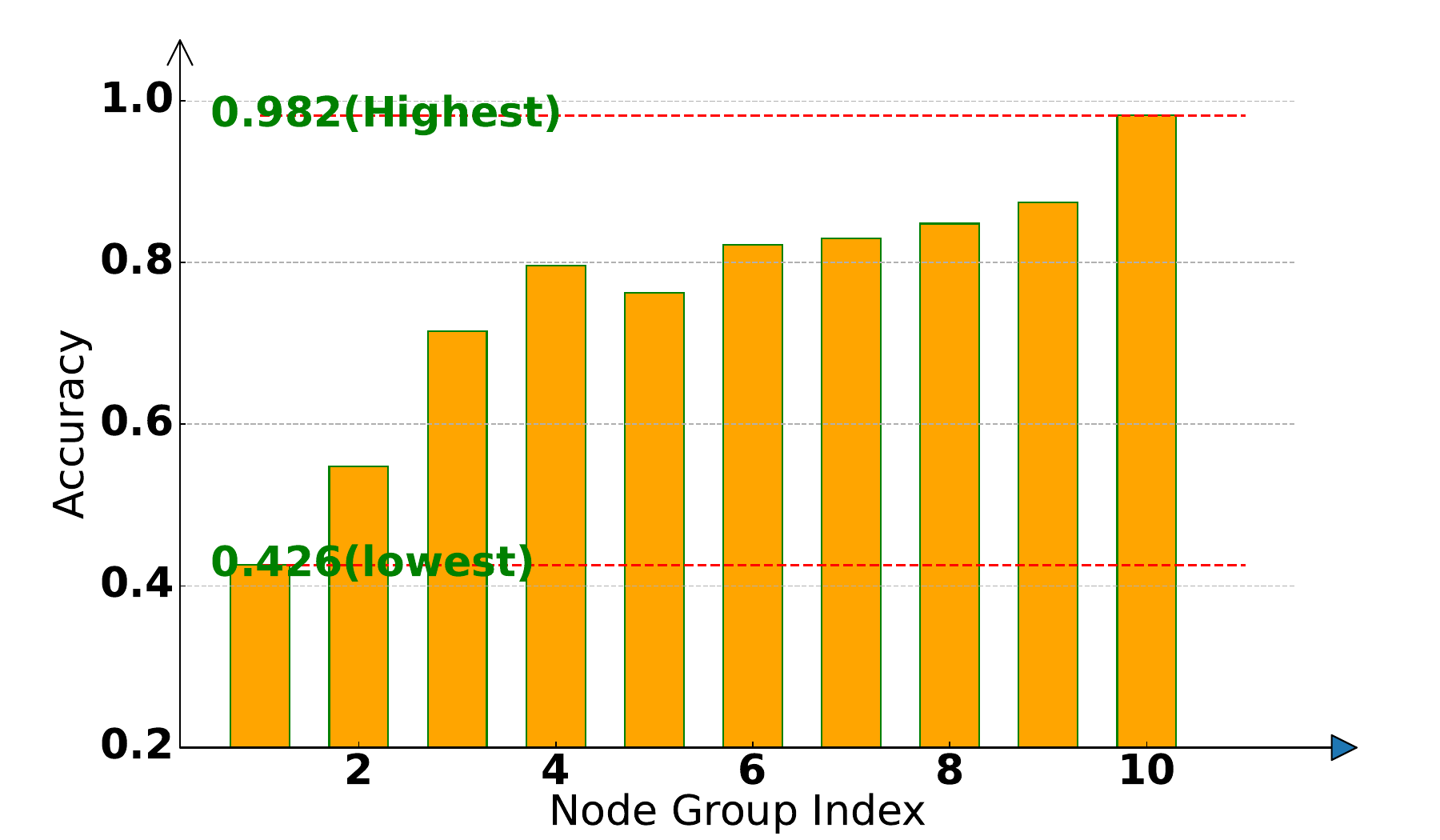}}
    \subfigure[GCN+ResNorm (Cora)]
    {\includegraphics[width=0.45\columnwidth]{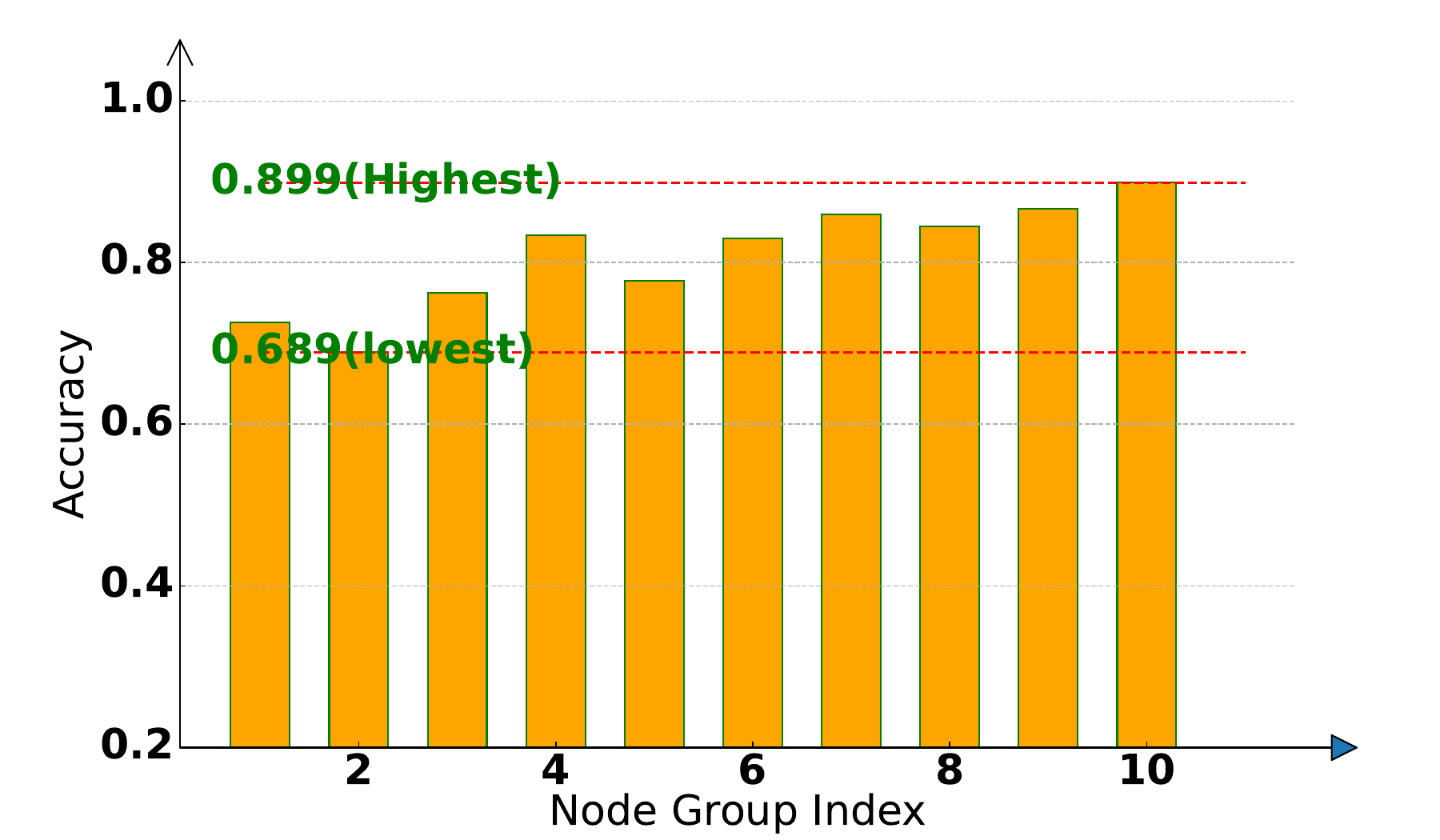}}
        \subfigure[GCN (Pubmed)]
    {\includegraphics[width=0.45\columnwidth]{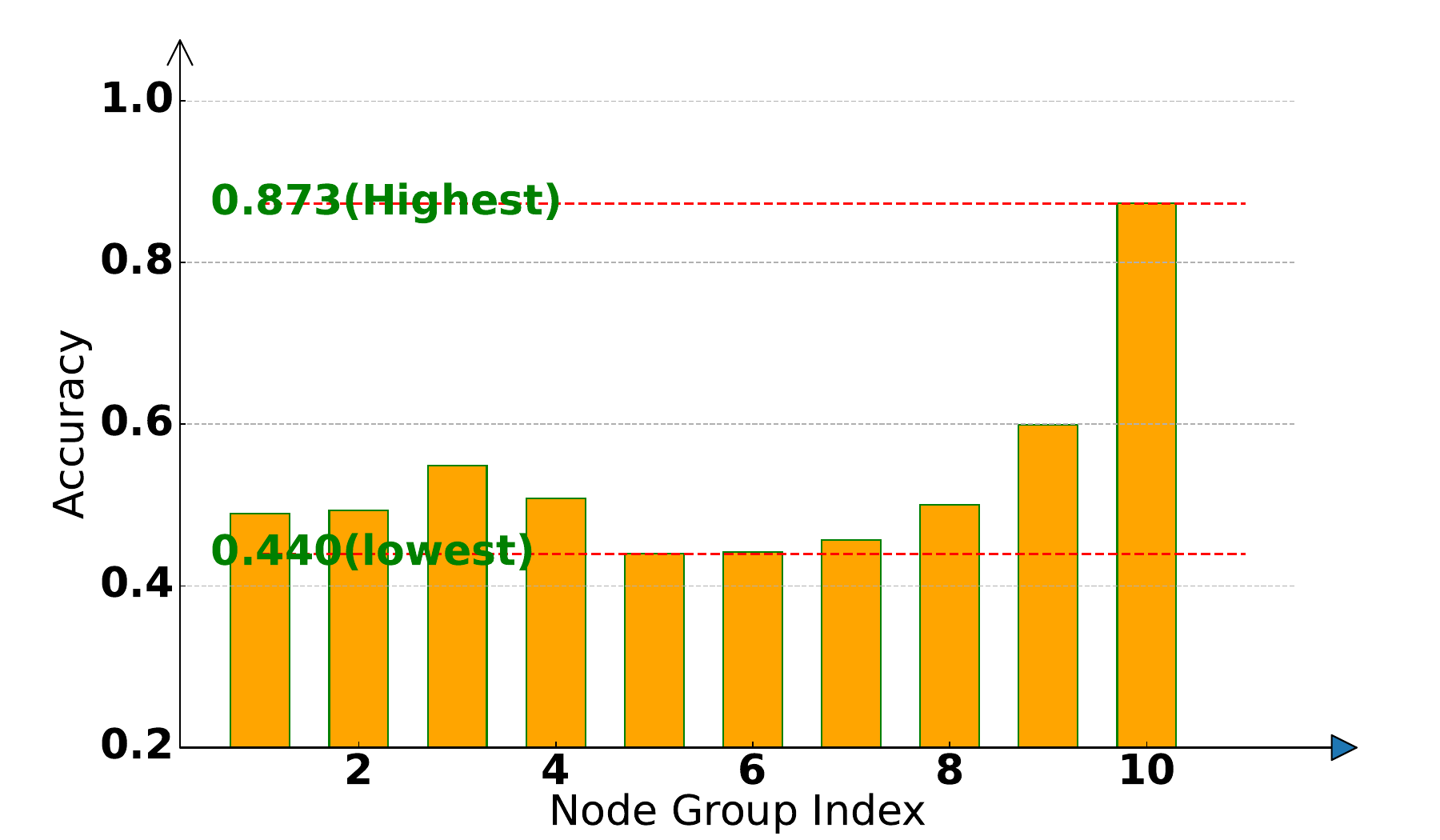}
    }
    \subfigure[GCN+ResNorm (Pubmed)]
    {\includegraphics[width=0.45\columnwidth]{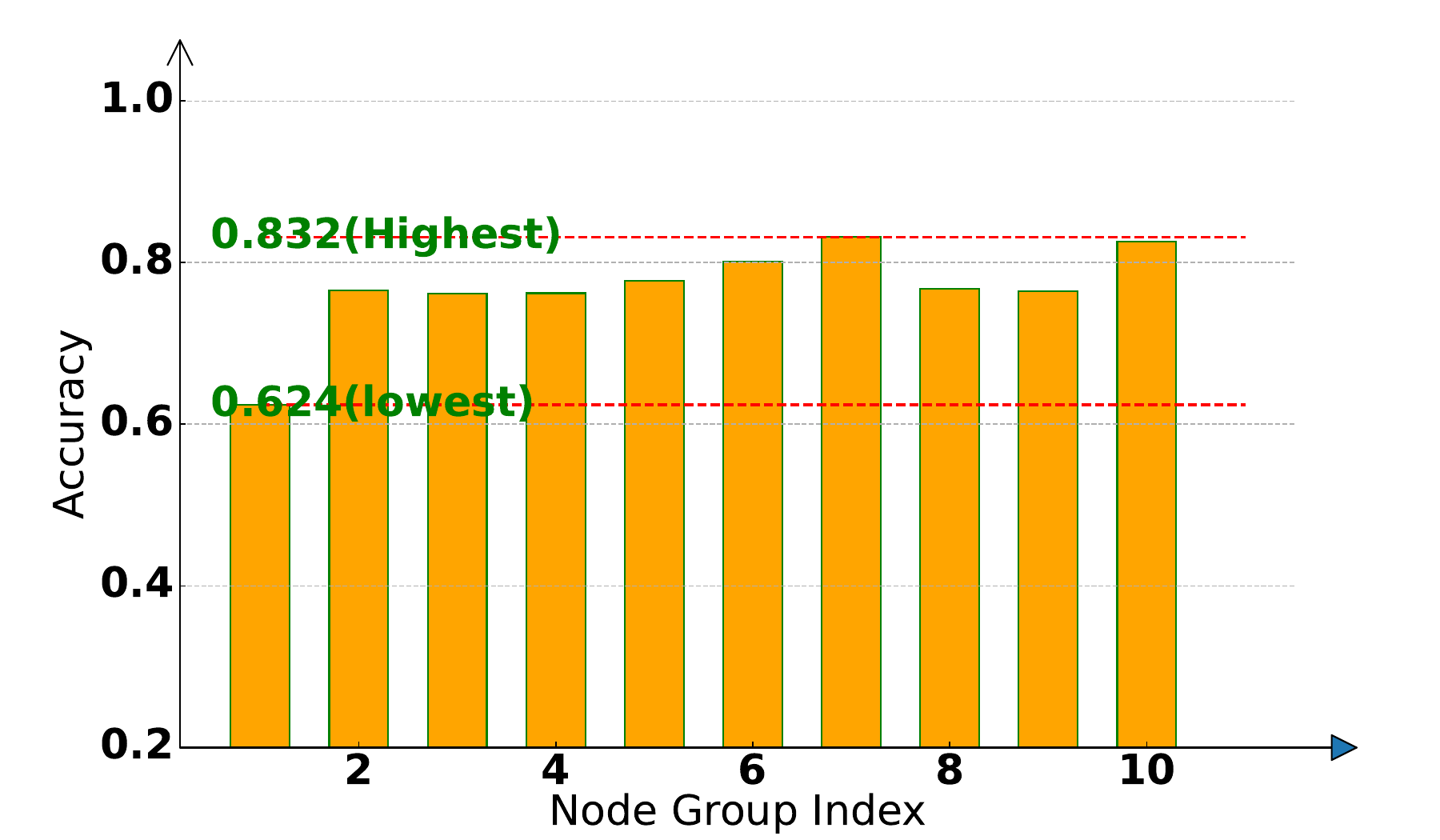}
    }
    \caption{Accuracy of different node groups on different datasets, where indices of node groups are ranked from smaller NStd to larger NStd.}
    \label{fig:8}
\end{figure*}

\subsection{Accuracy of Tail Nodes}
Here, we study the average test accuracy of different node groups split by their NStds. For a better illustration, we use 8-layer GCNs for the experiment. The node groups are ranked from smaller to larger NStd (the nodes in the group of index 0 have the smallest NStds). 
Figure~\ref{fig:8} depicts the average test accuracy of these 10 groups. The gap of accuracy among different node groups is significantly smaller when GCN is equipped with ResNorm. Particularly, the lowest accuracy bars of GCN+ResNorm are significantly higher than that of vanilla GCN. For example, the lowest bar of GCN+ResNorm on Cora is 68.9\% while that of GCN is only 42.6\%. The results demonstrate that ResNorm can significantly improve the performance of the tail nodes, which compensates for the declines of the highest accuracy bars. Consequently, the overall performance is improved. Note that for each node, a false prediction implies that GNN cannot learn a ``correct'' representation for it. Hence, ResNorm is essentially balancing the quality of learned representations across those representative and unrepresentative nodes.

\subsection{Hyperparameter Analysis}
\begin{figure}
    \centering
    \subfigure[]{\includegraphics[width=0.48\columnwidth]{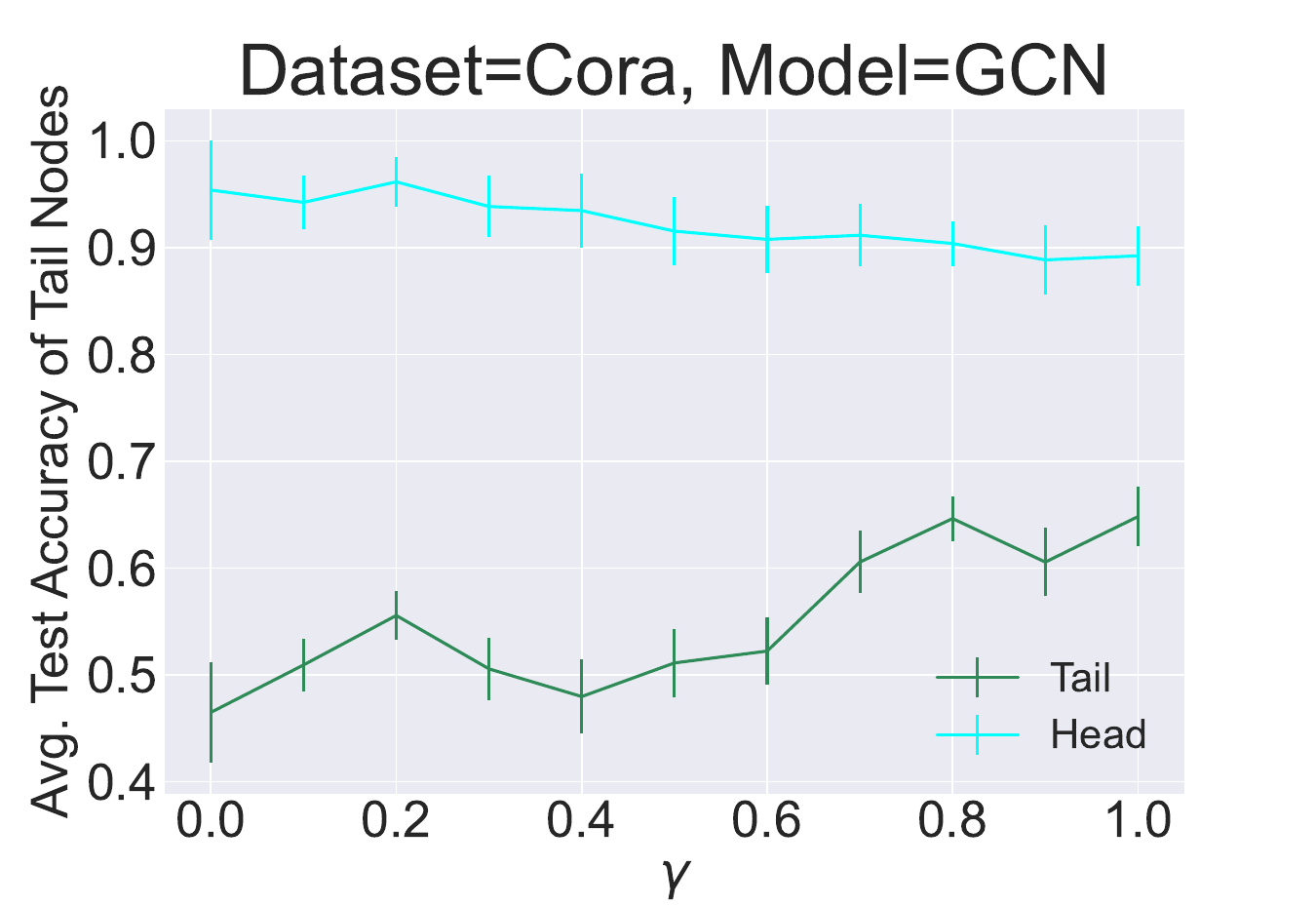}}
    \subfigure[]{\includegraphics[width=0.48\columnwidth]{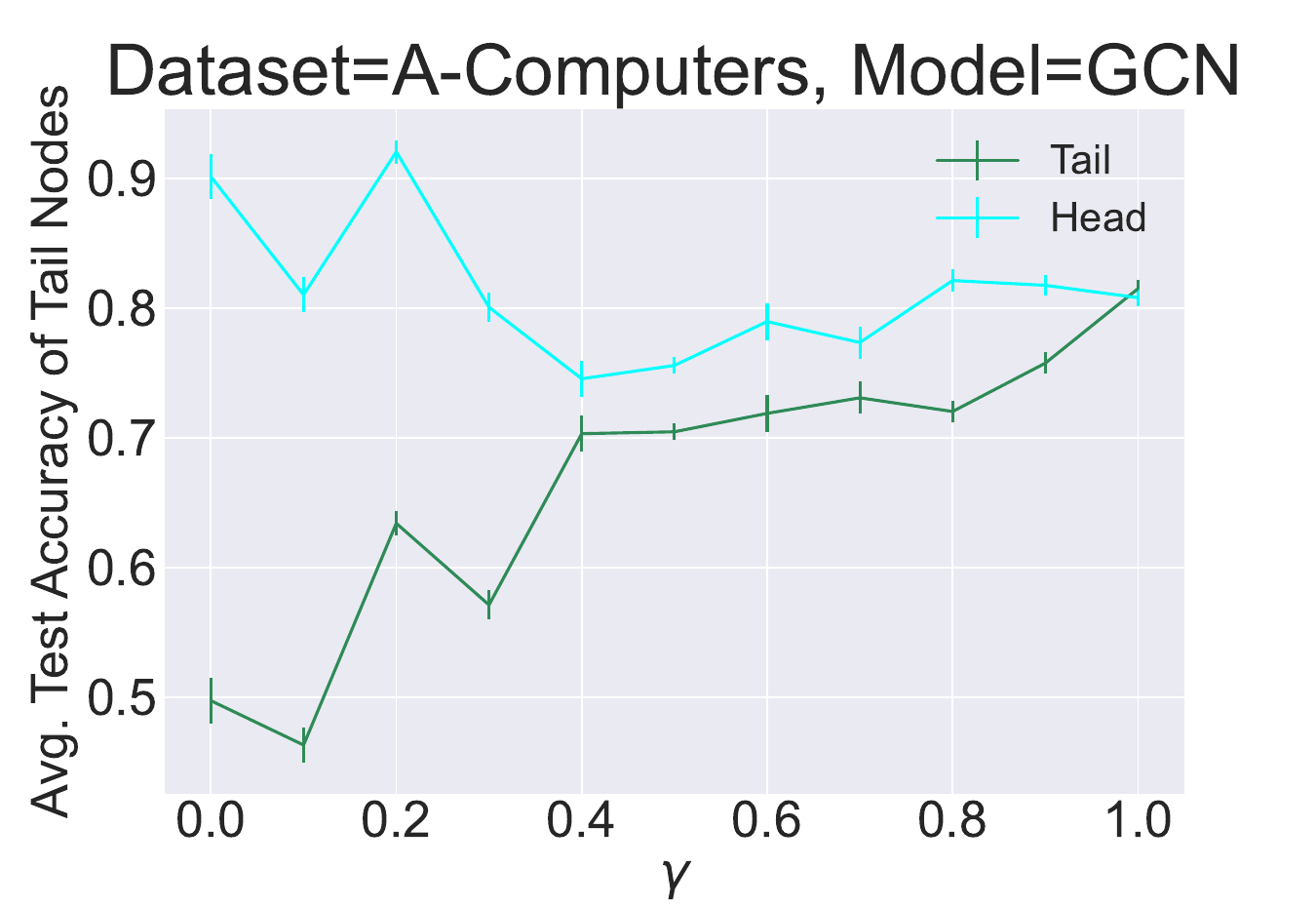}}
    \caption{Test accuracy of tail nodes w.r.t. $\gamma$. The error bar indicates a 95\% confidence interval.}
    \label{fig:lambda}
\end{figure}
To shed light on the working mechanism of ResNorm. we provide an analysis w.r.t. the parameter $\gamma$ based on the test accuracy of tail nodes (nodes with the smallest 10\% NStds) and head nodes (nodes with the largest 10\% NStds). Note that the larger the $\gamma$, the smaller the difference of NStds between the tail and head nodes. Intuitively, the difference in accuracy between the tail and head nodes also reduces. As shown in Figure \ref{fig:lambda}, as $\gamma$ increases, the accuracy of tail nodes gradually increases while that of head nodes gradually decreases, which conforms to our previous analysis that $\gamma$ can balance the performance gains of tail nodes and the performance drops of head nodes. We empirically found that $\gamma$ is related to the number of graph convolution layers. Generally, a deeper model may need a larger $\gamma$ because as the model gets deeper, the difference of accuracy between tail and head nodes may also enlarge.

\begin{table}[t]
\centering
\caption{Test accuracy of deep GNN models on citation networks (RN denotes ResNorm here). We also calculate the average improvement achieved by ResNorm over each backbone. }
\label{tab:3}
\begin{tabular}{c|c|c|c|c}
\midrule
Dataset &Model &Layer 16 &Layer 32 &Impro.(\%)\\

\midrule
\hline
\multirow{8}{*}{Cora} &GCN+PN
&62.5   &42.1 &\multirow{2}{*}{32.5}\\
&GCN+PN+RN
&\textbf{77.5} &\textbf{59.3} \\
\cline{2-5}
&APPNP &82.4 &82.4 &\multirow{2}{*}{3.1}\\
&APPNP+RN &\textbf{84.9} &\textbf{85.1}\\
\cline{2-5}
&DeepGCN &69.5   &57.4  &\multirow{2}{*}{21.5}\\
&DeepGCN+RN &\textbf{78.6}   &\textbf{74.7}\\
\cline{2-5}
&GCNII &84.6   &85.1  &\multirow{2}{*}{0.8}\\
&GCNII+RN &\textbf{85.4}   &\textbf{85.6}\\
\hline
\multirow{6}{*}{Citeseer} &GCN+PN
&39.0   &18.1 &\multirow{2}{*}{90.0}\\
&GCN+PN+RN
&\textbf{63.5} &\textbf{39.3} \\
\cline{2-5}
&APPNP &69.6 &69.7 &\multirow{2}{*}{7.0}\\
&APPNP+RN &\textbf{74.3} &\textbf{74.8}\\
\cline{2-5}
&DeepGCN &50.7	&41.9 &\multirow{2}{*}{36.9}\\
&DeepGCN+RN &\textbf{63.3}   &\textbf{62.4}\\
\cline{2-5}
&GCNII &72.2   &73.4  &\multirow{2}{*}{0.8}\\
&GCNII+RN &\textbf{72.8}   &\textbf{74.0}\\
\midrule    

\end{tabular}
\end{table}

\subsection{Evaluation on Deep GNNs}
To further investigate the efficacy of ResNorm, we evaluate its performance on state-of-the-art deep models. We employ four deep models as backbones, including APPNP~\cite{DBLP:conf/iclr/KlicperaBG19}, DeepGCN~\cite{DBLP:conf/iccv/Li0TG19}, GCN+PairNorm, and GCNII~\cite{DBLP:conf/icml/ChenWHDL20}. For APPNP, ResNorm is only applied at the final output since the propagation process has no parameter. We tune the strength of initial residual $\alpha \in \left\{0.15,0.2\right\}$. For GCN+PairNorm, ResNorm is applied right after PairNorm at each layer. We select the implementation of residual connection block for DeepGCN in Pytorch Geometric. We report the average test accuracy of 10 trials in Table \ref{tab:3}.
The results demonstrate that ResNorm, serving as a plug-and-play and light module, can significantly improve deep GNN models, even though the models already have been improved by other normalization technique that addresses the over-smoothing. For example, ResNorm+PairNorm achieves 32.5\% improvement over PairNorm on Cora. In particular, ResNorm+APPNP outperforms APPNP by a large margin, which validates that ResNorm is a practical normalization layer that can be used to enhance deep models. 
Most existing deep models only focus on the over-smoothing issue. However, the long-tailed degree distribution also degrades the performance of GNNs. We note that applying ResNorm to existing models is a practical way to address this issue.
\subsection{Ablation Study on the Shift Operation}
\begin{figure}
    \centering
    \subfigure[Cora]{\includegraphics[width=0.48\columnwidth]{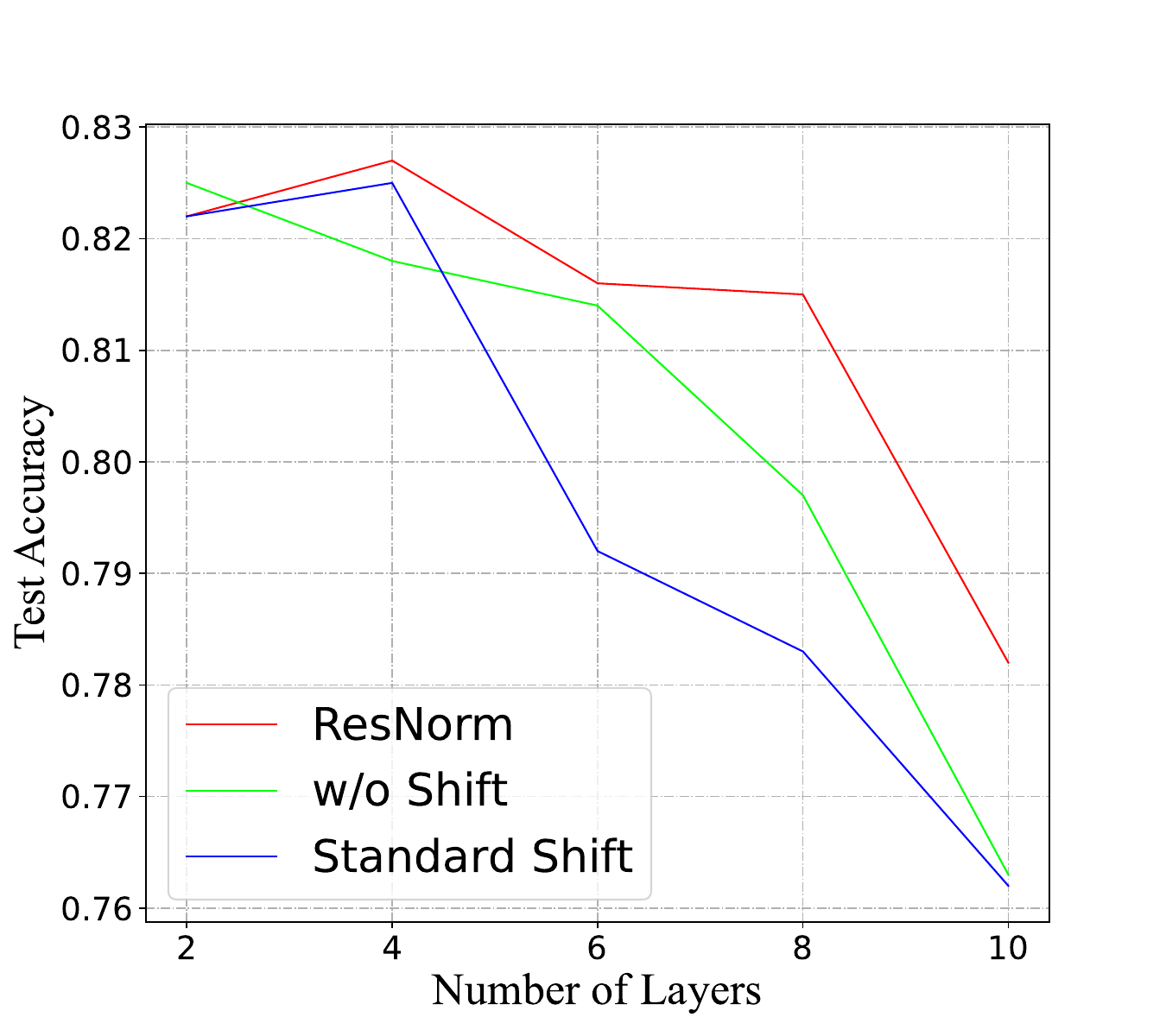}}
    \subfigure[Coauthor-CS]{\includegraphics[width=0.48\columnwidth]{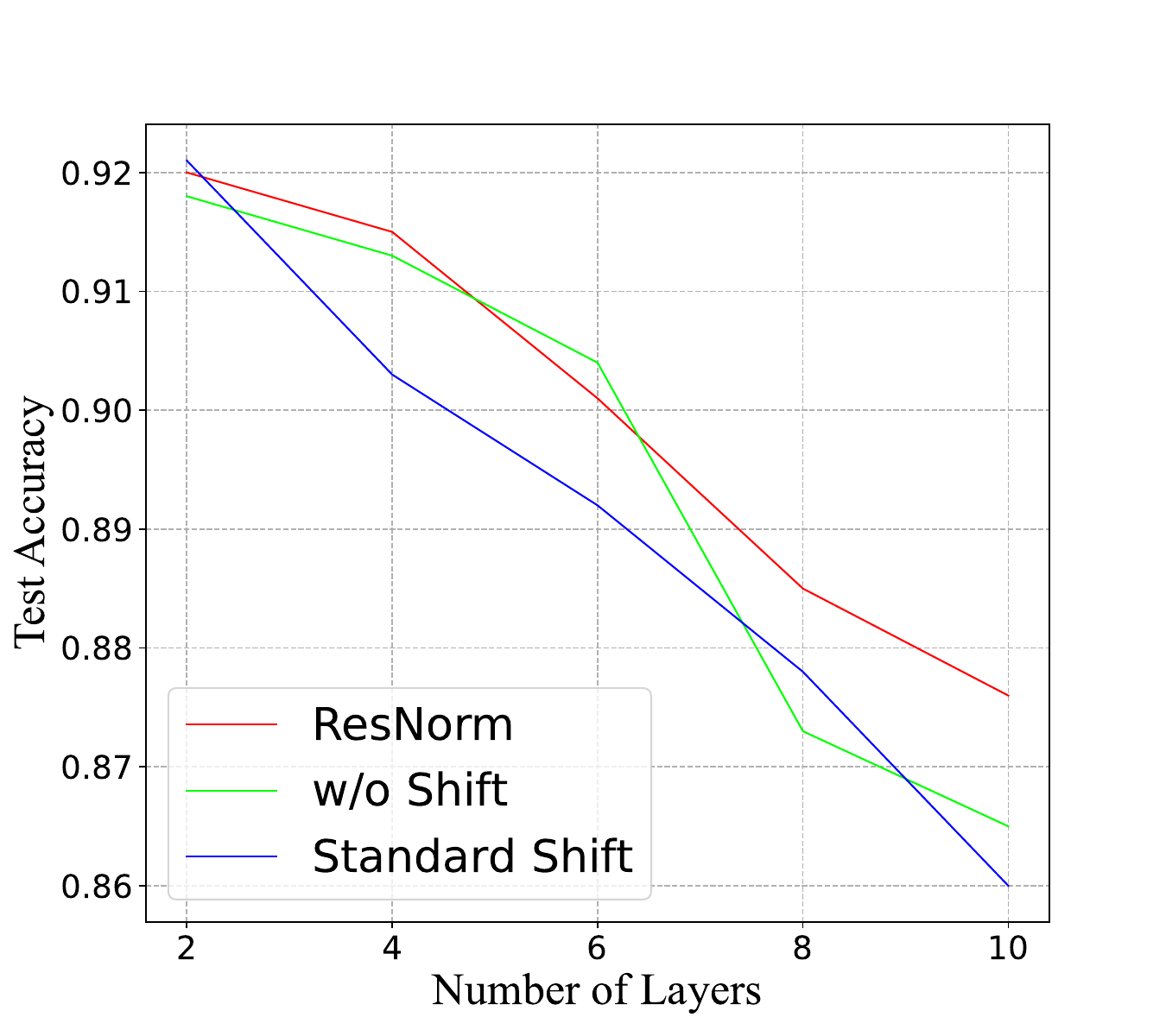}}
    \caption{Test accuracy w.r.t different numbers of convolutional layers.}
    \label{fig:ablation}
\end{figure}
This section presents an ablation study focused on the shift operation. Since our primary concern is addressing the issue of over-smoothing in the design of the shift operation, we examine the performance of ResNorm and its variants with varying numbers of layers. We explore two variants: the first one involves eliminating the shift operation from ResNorm, while the second variant replaces the shift operation with a standard shift that subtracts the mean from node representations. The empirical results presented in Figure \ref{fig:ablation} support our previous theoretical analysis, confirming that the use of standard shift can increase the risk of over-smoothing. It is observed that as the number of layers increases, the blue lines exhibit a significant drop compared to the red and green lines. Additionally, the red lines generally exhibit higher values compared to the green lines, further validating the efficacy of the node-wise shifting."

\section{Conclusion}\label{sec:7}
This paper studies tackling the long-tailed degree distribution issue in GNNs via normalization. We observe the variance expressing phenomenon and give explanations for why the head nodes commonly have larger NStd. We propose to leverage the distribution reshaping technique to improve the accuracy of tail nodes. The distribution reshaping operation constitutes the scale of ResNorm, aiming to implicitly reshape the distribution of node degree. We give an explanation of its mechanism using the influence distribution. Regarding the shift operation, we theoretically prove that the standard shift serves as a preconditioner of the weight matrix that shrinks the singular values. Consequently, the standard shift increases the risk of over-smoothing. With the over-smoothing in mind, we utilize a node-wise shift operation that can simulate the degree-specific parameter strategy. Experimental results confirm the effectiveness of ResNorm on the node classification task. 

\section{Limitations and Future Work}
The theoretical and empirical findings presented in this paper are mainly based on the most prevalent model in the field, namely GCN. In recent years, many advanced models have been proposed, and considering their utilization for analysis can offer valuable insights. However, such analysis may entail complex computational procedures and experiments, which are left as future research directions. Additionally, while the node-wise shift aims at addressing the issue of over-smoothing, it does not possess the ability to accelerate convergence as the standard shift does. Therefore, it is beneficial to investigate the potential for simultaneously reducing over-smoothing and improving convergence speed.

\ifCLASSOPTIONcaptionsoff
  \newpage
\fi



\bibliographystyle{abbrv}
\bibliography{bibtex/bib/IEEEexample}

\clearpage
\appendix
\begin{figure}[h]
    \centering
    \subfigure[Coauther-CS]
    {\includegraphics[width=0.47\columnwidth]{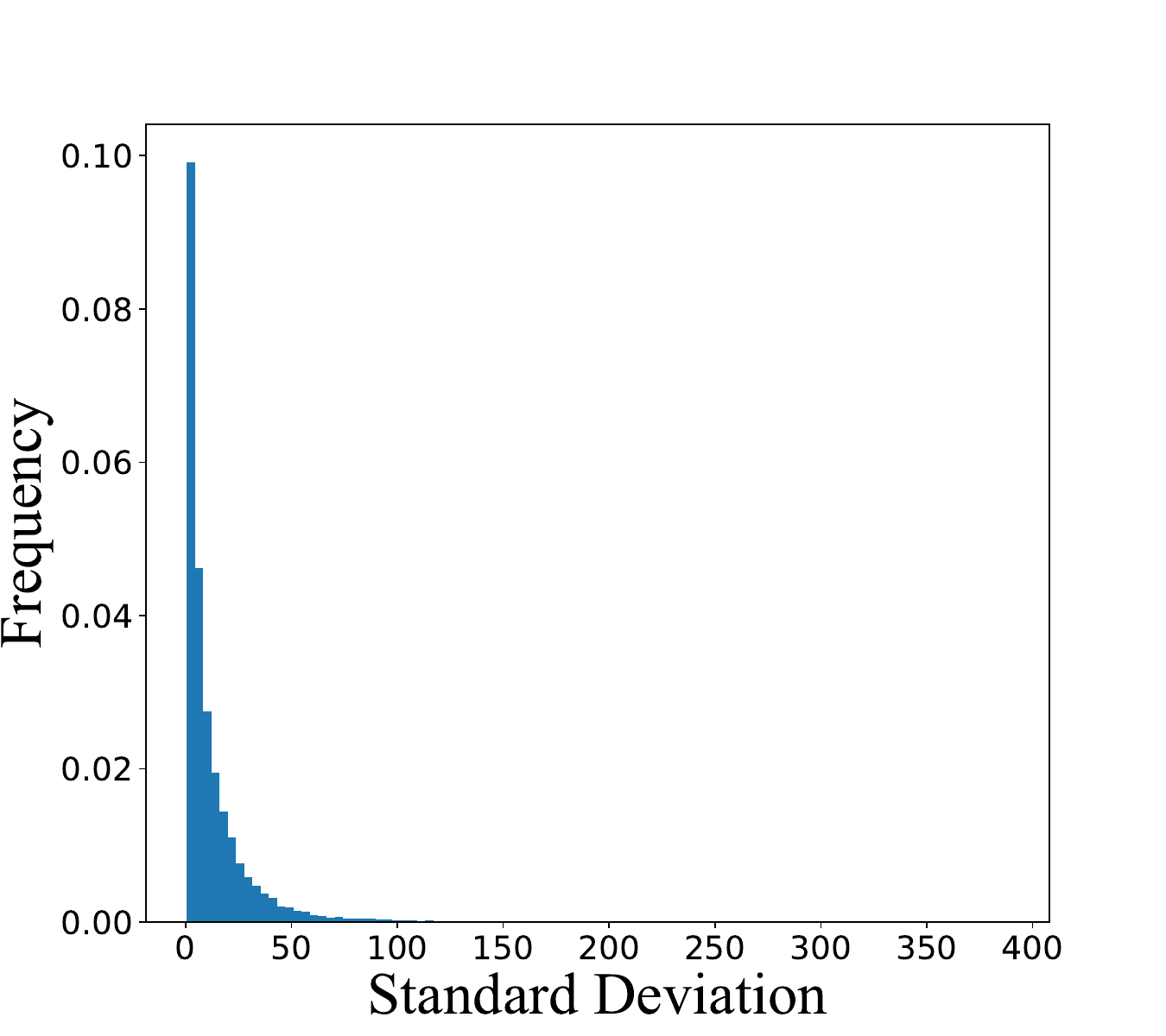}}
    \subfigure[Coauthor-Physics]
    {\includegraphics[width=0.47\columnwidth]{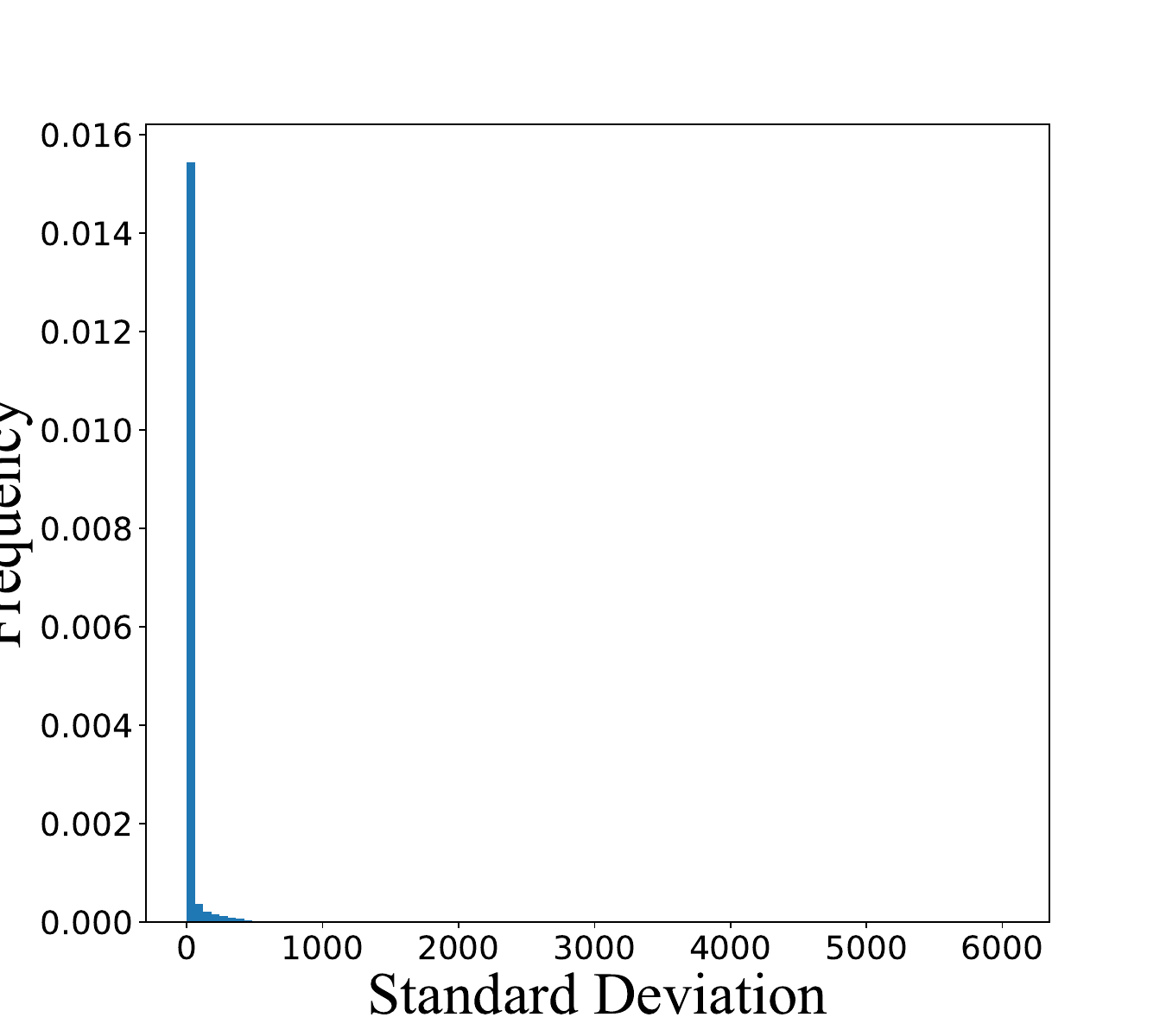}}
     \caption{NStd distributions.}
    \label{fig:nstd}
\end{figure}
\begin{figure}[h]
    \centering
    \subfigure[Coauthor-CS]
    {\includegraphics[width=0.47\columnwidth]{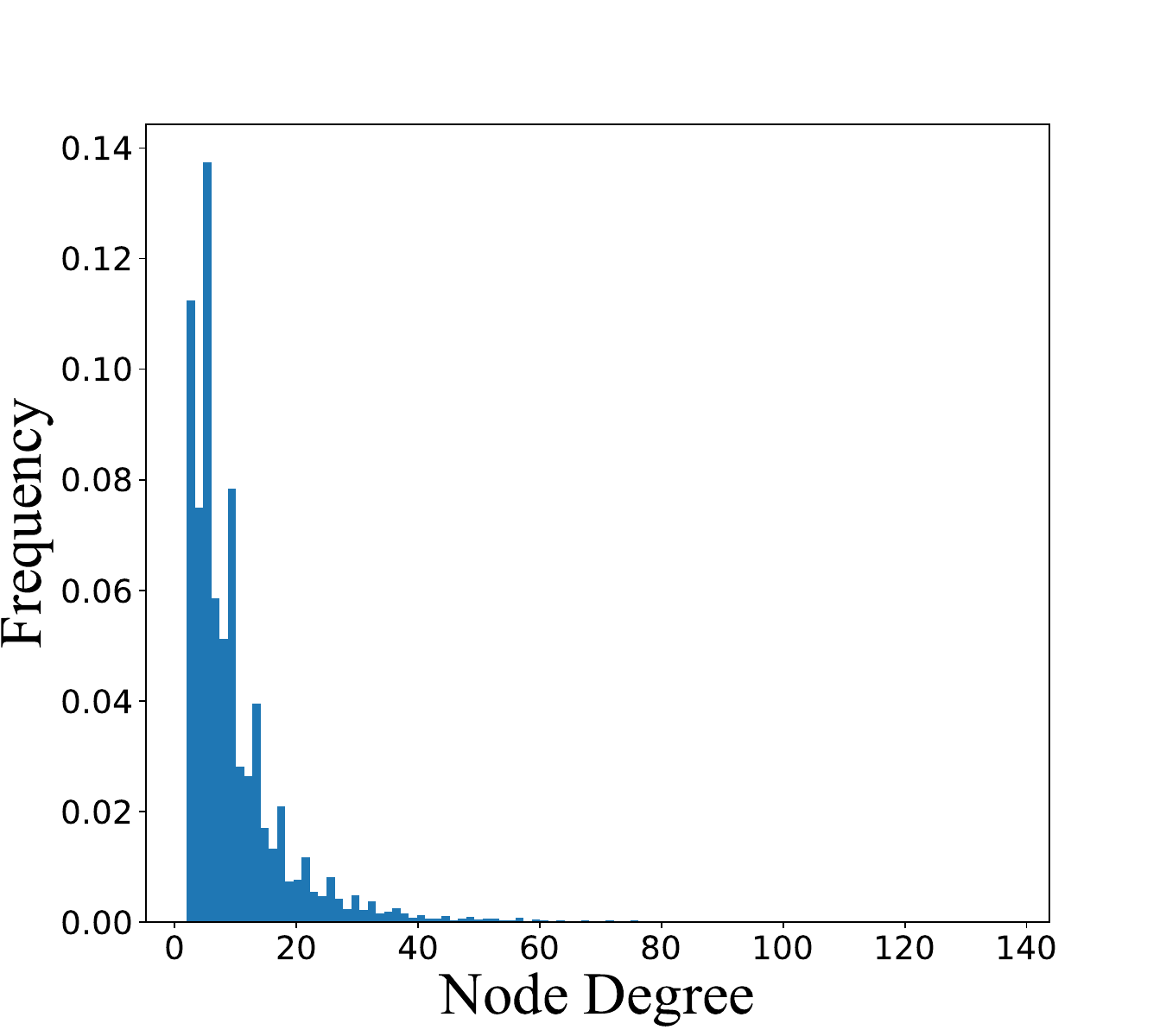}}
    \subfigure[Coauthor-Physics]
    {\includegraphics[width=0.47\columnwidth]{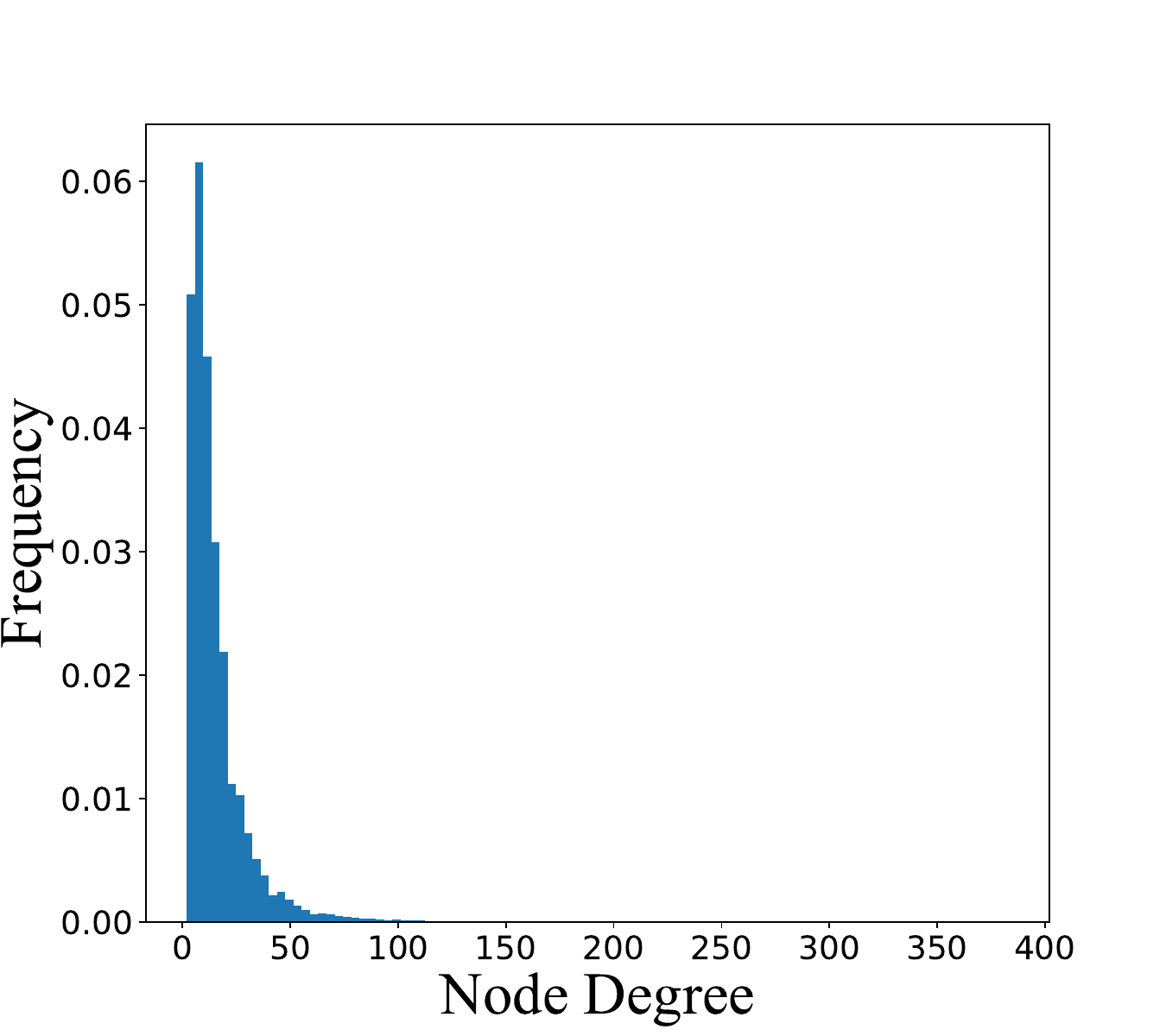}}
     \caption{Node degree distributions.}
    \label{fig:degree}
\end{figure}

\subsection{Proof of proposition \ref{theorem:1}}
\begin{proof}
We start by deriving the derivative between $\frac{ h_x}{\sigma_x^e}$ and $h_x$, where $\sigma_x$ denotes the std of d-dimensional vector $h_x$. Assuming $\sigma_x>0$, we have
\begin{equation} \label{Eq:jacobian_std}
    \begin{split}
        \frac{\mathrm{d}  (h_x\cdot \frac{1}{\sigma_x^e})}{\mathrm{d} h_x} &= \frac{\mathrm{d} (h_x \cdot (\frac{1}{d} h_x^T h_x - \frac{1}{d^2}\bm 1^T h_x \bm 1^T h_x)^{0.5e})}{\mathrm{d} h_x}\\
        &=\frac{1}{\sigma_x^e}\bm I_d - \frac{e}{d \sigma_x^{1+0.5e}} h_x(h_x^T - \mu \bm1^T),
    \end{split}
\end{equation}
where $\sigma_x$ is the standard deviation of $h_x$, $0<e<1$ a hyper-parameter, $\bm 1$ an all-ones vector and $\mu$ the mean of $h_x$.

Assume we apply the scale at each layer right after the output of convolution layer, \textit{i}.\textit{e}., $\bm H^{(l)} = Relu(Scale(GCN(\bm H^{(l-1)})))$ for any $1<l\leq k$. The output of the $l$-th convolution layer is denoted by $f_x^{(l)} = \frac{1}{\sqrt{deg_x}} \sum _{z\in \mathcal N(x)}\frac{1}{\sqrt{deg_z}}\bm W^{(l)} h_z^{(l-1)}$, and the output of scale is denoted by $B_x^{(l)} = f_x^{(l)}\cdot (\frac{1}{\sigma_x^{(l)}})^e$. Then the partial derivative between $h_x^{(l)}$ and $h_y^{(0)}$ is derived as
\begin{equation}
    \begin{split}
        &\frac{\partial h_x^{(l)}}{\partial h_y^{(0)}} = \frac{\partial h_x^{(l)}}{\partial B_x^{(l)}} \frac{\partial B_x^{(l)}}{\partial f_x^{(l)}}
        \frac{\partial f_x^{(l)}}{\partial h_y^{(0)}}\\
        &=diag(1_{B_x^{(l)}>0} )((\frac{1}{\sigma_x^{(l)}})^e\bm I_d - \frac{e}{d (\sigma_x^{(l)})^{(1+0.5e)}} \cdot \\
        &f_x^{(l)}(f_x^{(l)^T} - \mu_x^{(l)} \bm1^T)) \frac{1}{\sqrt{deg_x}} \bm W^{(l)} \sum_{z\in \mathcal{N}(x)} \frac{1}{\sqrt{deg_z}} \frac{\partial h_z^{(l-1)}}{\partial h_y^{(0)}},
    \end{split}
\end{equation}
where $1_{B_x^{(l)}>0}$ is an indicator vector indicates whether each element of $B_x^{(l)}$ is activated or not. By chain rule, we have
\begin{equation}
    \begin{split}
        &\frac{\partial h_x^{(k)}}{\partial h_y^{(0)}} = \frac{\sqrt{deg_x}}{\sqrt{deg_y}} \sum_{p=1}^\Psi  \prod_{l=k}^1 \frac{1}{deg(v_p^l)} diag(1_{B_{v_p^l}>0}) \Theta_{v_p^l} \bm W^{(l)} ,
    \end{split}
\end{equation}
where $\Psi$ is the number of paths $v_p^k v_p^{k-1},...,v_p^0$ of length k+1 that start from node $x$ and end with $y$, $v_p^{l-1} \in \mathcal{N}(v_p^l)$, $\Theta_{v_p^l}$ represents the Jacobian matrix produced by scale in Eq.(\ref{Eq:jacobian_std}).
Similar to~\cite{DBLP:conf/icml/XuLTSKJ18}, the expectation of $\frac{\partial h_x^{(l)}}{\partial h_y^{(0)}}$ can be estimated as
\begin{equation}
  E[\frac{\partial h_x^{(l)}}{\partial h_y^{(0)}}] = \frac{\sqrt{deg_x}}{\sqrt{deg_y}} \rho \prod_{l=k}^1 \Theta_{v_p^l} \bm W^{(l)} (\sum_{p=1}^\Psi \prod_{l=k}^1 \frac{1}{\deg(v_p^l)}),
\end{equation} 
where $0<\rho<1$ is a probability. Consider when we do not apply the scale, \textit{i}.\textit{e}., $E[\frac{\partial h_x^{(l)}}{\partial h_y^{(0)}}] = \frac{\sqrt{deg_x}}{\sqrt{deg_y}} \rho \prod_{l=k}^1 \bm W^{(l)} (\sum_{p=1}^\Psi \prod_{l=k}^1 \frac{1}{\deg(v_p^l)})$. Note that $(\sum_{p=1}^\Psi \prod_{l=k}^1 \frac{1}{\deg(v_p^l)})$ is exactly the probability that a k-step random walk will stop at $y$ when it starts from $x$, the key point is that $\prod_{l=k}^1 \bm W^{(l)}$ is irrelevant to the starting and ending point, thus the sum of entries of $\prod_{l=k}^1 \bm W^{(l)}$ is the same for all paths. Since multiplied by a same term for all paths does not change the probability (ratio), the expectation of influence distribution is still equivalent to the $k$-step random walk distribution. Although GCN has a bias term $\frac{\sqrt{deg_x}}{\sqrt{deg_y}}$ that penalizes the nodes with large degrees, the influence distribution is still \textit{static} as $\frac{\sqrt{deg_x}}{\sqrt{deg_y}}$ cannot be modified during training. In contrast, the scale introduces an additional derivative $\Theta_{v_p^l}$ that is trainable and varies with the current node representation $f_{v_p^l}$, making the influence distribution dynamic and adaptive. The norm of $\Theta_{v_p^l}$ is mainly determined by $\frac{1}{\sigma_{v_p^l}^e}\bm I$ due to the small norm of $\frac{e}{d \sigma_{v_p^l}^{1+0.5e}} f_{v_p^l}(f_{v_p^l}^T - \mu \bm1^T)$. Hence, the sum of the absolute values of the entries of $\prod_{l=k}^1 \Theta_{v_p^l} \bm W^{(l)}$ largely depends on the magnitudes of $\frac{1}{\sigma_{v_p^l}^e}$. This manifests that if $y$ reaches $x$ via hub nodes (large-std nodes), then the relative influence of $y$ on $x$ will be reduced, which in turn reduces the relative influence of the hub nodes.
\end{proof}

\subsection{Proof of proposition \ref{prop:2}}
\begin{proof}
We first derive $\bm h'$.
\begin{equation}
    \begin{split}
         \bm h' &=  f(\text{Scale}(\bm W' \bm x_i)) = f(\frac{\bm W' \bm x_i \overline{\sigma \prime^\gamma}}{\sigma_i \prime^ {\gamma+e}} ), \\
         &= f(\frac{\delta \bm W \bm x_i  \overline{(\delta \sigma) ^\gamma}}{(\delta \sigma)^{\gamma+e}})
        =\delta ^ {(1-e)} f(\frac{ \bm W \bm x_i  \overline{\sigma^\gamma}}{\sigma_i^{(e+\gamma)}}),\\
        &= \delta ^ {(1-e)} f(\text{Scale}(\bm W \bm x_i))
        = \delta ^ {(1-e)} \bm{h}.   
    \end{split}
\end{equation}
Since $0<e<1$, we further have $|\frac{\bm h'}{\bm h}-1| - | \frac{\bm W'}{\bm W}-1|$ = $|\delta^{(1-e)}-1| - |\delta-1| \le 0$.
\end{proof}

\subsection{Proof of theorem \ref{thm:1}}
\begin{proof}
Let $\lambda_1 \leq \lambda_2 \leq ... \leq \lambda_d$ be the eigenvalues of $\bm {W^T} \bm{W}$ and $\mu_1 \leq \mu_2 \leq ... \leq \mu_d$ be the eigenvalues of $\bm{ W^T S^T S  W}$.

As singular values of a matrix $\bm P$ are equal to the square root of the eigenvalues of $\bm {P^T} \bm P$, we only need to prove $\lambda_1 \ge \mu_1$ and $\lambda_d \ge \mu_d$. Note that $\bm{S^TS}$ is a real symmetric matrix and the eigenvalues of $\bm{S^TS}$ are $0,1,1,...,1$, there exists an orthogonal matrix $\bm{U}$ that satisfies $\bm{S^TS=U^T} \text{diag}(0,1,...,1) \bm{U} $. We begin with
\begin{equation}
\begin{split}
 &\bm{W^TW - W^T S^T S W} = \bm{W^T (I_d - S^T S) W}, \\
 &= \bm{W^T U^T(I_d}-\text{diag}(0,1,...,1)) \bm{U W},\\
 & = \bm{W^T U^T }\text{diag}(1,0,...,0) \bm{UW}, \\
 & = \bm{W^T U^T} (\text{diag}(1,0,...,0))\bm{^T} \text{diag}(1,0,...,0) \bm{UW}, \\
& = \bm{D^TD},
\end{split}
\end{equation}
where $\bm {D= }\text{diag}(1,0,...,0) \bm{UW}$. Let $\bm \xi$ be the eigenvector associated with $\lambda_1$ such that $\bm{\xi^T\xi}=1$ and $\bm{W^TW\xi=} \lambda_1\bm{\xi}$. Then, we have 
\begin{equation}
  \begin{split}
       &0 \leq \bm{\xi^T} \bm{D^TD\xi} =\bm{\xi^T} \bm{W^TW \xi -\xi} \bm{W^TS^T SW \xi},\\
       &=\lambda_1 \bm{\xi^T \xi} - \bm{\xi W^TS^T SW \xi}
      = \lambda_1 - \bm{\xi W^TS^T SW \xi}.
  \end{split}  
\end{equation}
Since $\bm{W^T S^T S W}$ is a real symmetric matrix, $\bm{W^T S^T S W}$ can be decomposed as $\bm{Q^T }\text{diag}(\mu_1,\mu_2,...,\mu_d)\bm{Q}$, where $\bm{Q}$ is an orthogonal matrix. Then $\lambda_1 - \bm{\xi W^TS^T SW \xi} = \lambda_1 - \bm{\xi^T Q^T}\text{diag}(\mu_1,\mu_2,...,\mu_d)\bm{Q\xi}$. We denote $\bm{Q \xi = \pi} = [\pi_1,\pi_2,...,\pi_d]$. Note that $\bm{(Q\xi)^T Q\xi = \xi^T Q^T Q \xi=\xi^T\xi=1}$, $\bm{\pi}$ is also an unit vector such that $\sum \pi_i^2=1$.
We derive $\lambda_1 > \mu_1$ as below.
\begin{equation}
    \begin{split}
        0 \leq  \lambda_1 - &\bm{\xi^T Q^T}\text{diag}(\mu_1,\mu_2,...,\mu_d)\bm{Q\xi} \\
        &= \lambda_1 - \bm{\pi^T}\text{diag}(\mu_1,\mu_2,...,\mu_d)\bm{\pi},\\
        &=\lambda_1 - \mu_1 \pi_1^2+\mu_2 \pi_2^2+...+\mu_d \pi_d^2,\\
        &\leq\lambda_1-\mu_1(\pi_1^2+\pi_2^2+...+\pi_d^2)=\lambda_1 - \mu_1.
    \end{split} \label{eq:10}
\end{equation}
Similarly, let $\bm \gamma$ be the unit eigenvector of  $\mu_d$. Then we can immediately obtain $\bm{\gamma W^TW  \gamma}\leq \lambda_d$ similar to inequality (\ref{eq:10}). Finally, we derive $\lambda_d > \mu_d$ as follows,
\begin{equation}
\begin{split}
    0 &\leq  = \bm{\gamma W^TW  \gamma- \gamma^T W^T S^T S W \gamma}\\
    &= \bm{\gamma W^TW  \gamma}-\mu_d \leq \lambda_d -\mu_d.
\end{split}
\end{equation}
With $\lambda_1\ge\mu_1\ge0$ and $\lambda_d\ge\mu_d\ge0$, we can obtain $\eta_1 = \sqrt{\lambda_1} \ge \sqrt{\mu_1}=\kappa_1$ and $\eta_d = \sqrt{\lambda_d} \ge \sqrt{\mu_d}=\kappa_d$, which completes the proof.

\end{proof}
    Remark 1 can be easily justified in the same way. Similarly, let $\lambda_1$ and $\mu_1$ be the smallest eigenvalue of $\bm{W^T S_i^T S_i W }$ and $\bm{W^T S^T S W}$, respectively.
    \begin{equation}
    \bm {W^T S_i^T S_i W - W^T S^T S W}= \frac{1-k_i}{d}\bm{W^T 1 1^T W}
    \end{equation}
    Since $0 < 1 - k_i < 1$, it follows that $\frac{1-k_i}{d} > 0$. Let $\bm{D} = \sqrt{\frac{1-k_i}{d}}\bm{1^T W}$ and $\bm{\xi}$ be the unit eigenvector associated with $\lambda_1$, such that $\bm{W^T S_i^T S_i W\xi} = \lambda_1\bm{\xi}$ is satisfied. We reach,

\begin{equation}
  \begin{split}
       &0 \leq \bm{\xi^T} \bm{D^TD\xi} =\bm{\xi^T} \bm{W^T S_i^T S_i W \xi -\xi} \bm{W^TS^T SW \xi},\\
       &=\lambda_1 \bm{\xi^T \xi} - \bm{\xi W^TS^T SW \xi}
      = \lambda_1 - \bm{\xi W^TS^T SW \xi}.
  \end{split}  
\end{equation}
The following steps are identical to those in the proof of theorem 1.

\subsection{Dataset}
Here we briefly introduce the datasets used in our experiments.

Cora, Citeseer, and Pubmed are citation networks where nodes represent scientific papers and edges are citation relationships. Node features are bag-of-words representations and each label represents the field that the paper belongs to.

Texas is collected as part of CMU WebKB
project. In this dataset, nodes are university web pages and edges are hyperlinks
between these pages. The task is to classify the nodes into one of the five categories, student, project, course, staff, and faculty.

Actor is a co-occurrence network generated from the film-director-actor-writer network, where node features are bag-of-words representations of the Wikipedia pages of actors. Edges symbolize the two actors' co-occurrence on the same web page. The task is to classify the nodes into five categories in terms of words of the actor’s Wikipedia.

Squirrel is a network of web pages on Wikipedia regarding animals. Node features are bag-of-words representations of nouns on the respective pages. The task is to classify pages into five categories based on the average traffic they receive.

Email is an e-mail network
between members of a European research institution, where each
node is a member, and the edges denote the e-mail communications
between members.

Amazon products are co-purchase networks where nodes represent goods and edges represent that two goods are frequently bought together. Given product reviews as bag-of-words node features, the task is to map goods to their respective product category.

Coauthor CS and Coauthor Physics are co-author networks where nodes represent authors that are connected by an edge if they co-authored a paper. Given paper keywords for each author’s papers, the task is to map authors to their respective field of study.

ogbn-arxiv is a directed graph, representing the citation network between all Computer Science (CS) arXiv papers. Each node is an arXiv paper and each directed edge indicates that one paper cites another one. Each paper comes with a 128-dimensional feature vector obtained by averaging the embeddings of words in its title and abstract. The embeddings of individual words are computed by running the skip-gram model~\cite{mikolov2013distributed} over the MAG corpus.
\end{document}